\def \bbE{\mathbb E}
\def \P{\mathbb P}
\def \R{\mathbb R}
\def \bA{\boldsymbol{A}}
\def \bD{\boldsymbol{D}}
\def \bA{\boldsymbol{A}}
\def \bL{\boldsymbol{L}}
\def \b1{\boldsymbol{1}}
\def \hd{\hat{d}}
\def \X{{\cal X}}
\def \H{{\cal H}}
\def \A{{\cal A}}
\def \E{{\cal E}}
\def \C{{\cal C}}
\def \O{{\cal O}}
\def \1{\mathbbm{1}}
\def \0{\textbf{0}}
\newtheorem{theorem}{Theorem}[section]
\newtheorem{lemma}[theorem]{Lemma}
\newcommand{\query}[2]{\mathsf{Q}(#1, #2)}
\newcommand{\TUi}[3]{U_{#2, #3}^{\triangle_{#1} } }
\newcommand{\TU}[2]{U_{#1, #2}^{\triangle } }
\newcommand{\TLi}[3]{L_{#2, #3}^{\triangle_{#1} } }
\newcommand{\TL}[2]{L_{#1, #2}^{\triangle } }
\title{Learning Nearest Neighbor Graphs from Noisy Distance Samples}
\author{%
  Blake Mason \thanks{Authors contributed equally to this paper and are listed alphabetically.}\\
  University of Wisconsin \\
  Madison, WI 53706 \\
  \texttt{bmason3@wisc.edu} \\
  \And
  Ardhendu Tripathy \footnotemark[1] \\
  University of Wisconsin\\
  Madison, WI 53706 \\
  \texttt{astripathy@wisc.edu} \\
  \And
  Robert Nowak \\
  University of Wisconsin\\
  Madison, WI 53706 \\
  \texttt{rdnowak@wisc.edu} \\  
}
\begin{document}

\maketitle

\begin{abstract}
We consider the problem of learning the \emph{nearest neighbor graph} of a dataset of $n$ items. The metric is unknown, but we can query an oracle to obtain a noisy estimate of the distance between any pair of items. This framework applies to problem domains where one wants to learn people's preferences from responses commonly modeled as noisy distance judgments. In this paper, we propose an active algorithm to find the graph with high probability and analyze its query complexity. In contrast to existing work that forces Euclidean structure, our method is valid for general metrics, assuming only symmetry and the triangle inequality. Furthermore, we demonstrate efficiency of our method empirically and theoretically, needing only $\O(n\log(n){\Delta^{-2}})$ queries in favorable settings, where ${\Delta^{-2}}$ accounts for the effect of noise. Using crowd-sourced data collected for a subset of the UT~Zappos50K dataset, we apply our algorithm to learn which shoes people believe are most similar and show that it beats both an active baseline and ordinal embedding. 
\end{abstract}

\section{Introduction}\label{sec:intro}
In modern machine learning applications, we frequently seek to learn proximity/ similarity relationships between a set of items given only noisy access to pairwise distances. For instance, practitioners wishing to estimate internet topology frequently collect one-way-delay measurements to estimate the distance between a pair of hosts \citep{eriksson2010learning}. Such measurements are affected by physical constraints as well as server load, and are often noisy. Researchers studying movement in hospitals from WiFi localization data likewise contend with noisy distance measurements due to both temporal variability and varying signal strengths inside the building \citep{booth2019toward}. Additionally, human judgments are commonly modeled as noisy distances \citep{shepard1962analysis, kruskal1964nonmetric}. As an example, Amazon Discover asks customers their preferences about different products and uses this information to recommend new items it believes are similar based on this feedback. We are often primarily interested in the \emph{closest} or \emph{most similar} item to a given one-- e.g., the closest server, the closest doctor, the most similar product. 
The particular item of interest may not be known \textit{a priori}. 
Internet traffic can fluctuate, different patients may suddenly need attention, and customers may be looking for different products. To handle this, we must learn the closest/ most similar item for \emph{each} item. 
This paper introduces the problem of learning the \emph{Nearest Neighbor Graph} that connects each item to its nearest neighbor from noisy distance measurements. 

\fbox{
\parbox{0.97\textwidth}{\textbf{Problem Statement:}
Consider a set of $n$ points $\X = \{x_1, \cdots, x_n\}$ in a metric space. The metric is unknown, but we can query a stochastic oracle for an estimate of any pairwise distance. In as few queries as possible, we seek to learn a nearest neighbor graph of $\X$ that is correct with probability $1-\delta$, where each $x_i$ is a vertex and has a directed edge to its nearest neighbor $x_{i^\ast} \in \X \setminus \{x_i\}$.
}}

\subsection{Related work}\label{sec:related}
Nearest neighbor problems (from noiseless measurements) are well studied and we direct the reader to \cite{bhatia2010survey} for a survey. \cite{clarkson1983fast, vaidya1989ano, sankaranarayanan2007fast} all provide theory and algorithms to learn the nearest neighbor graph which apply in the noiseless regime. Note that the problem in the noiseless setting is \emph{very} different. If noise corrupts measurements, the methods from the noiseless setting can suffer persistent errors. There has been recent interest in introducing noise via subsampling for a variety of distance problems  \cite{lejeune2019adaptive, bagaria2017medoids, bagaria2018adaptive}, though the noise here is not actually part of the data but introduced for efficiency. In our algorithm, we 
use the triangle inequality to get tighter estimates of noisy distances in a process equivalent to the classical Floyd--Warshall \cite{floyd1962algorithm, cormen2009introduction}. This has strong connections to the metric repair literature \citep{brickell2008metric, gilbert2017if} where one seeks to alter a set of noisy distance measurements as little as possible to learn a metric satisfying the standard axioms. \citep{singla2016actively} similarly uses the triangle inequality to bound unknown distances in a related but noiseless setting. In the specific case of noisy distances corresponding to human judgments, a number of algorithms have been proposed to handle related problems, most notably Euclidean embedding techniques, e.g. \citep{jain2016finite, van2012stochastic, kruskal1964nonmetric}. To reduce the load on human subjects, several attempts at an active method for learning Euclidean embeddings have been made but have only seen limited success \cite{jamieson2015next}. Among the culprits is the strict and often unrealistic modeling assumption that the metric be Euclidean and low dimensional. 


\subsection{Main contributions}\label{sec:contrib}
In this paper we introduce the problem of identifying the \emph{nearest neighbor graph} from noisy distance samples and propose \texttt{ANNTri}, an active algorithm, to solve it for general metrics. We empirically and theoretically analyze its complexity to show improved performance over a passive and an active baseline. In favorable settings, such as when the data forms clusters, \texttt{ANNTri} needs only $\O(n\log(n)/{\Delta^{2}})$ queries, where $\Delta$ accounts for the effect of noise. Furthermore, we show that \texttt{ANNTri} achieves superior performance compared to methods which require much stronger assumptions. We highlight two such examples. In Fig.~\ref{fig:triangulation}, for an embedding in $\R^2$, \texttt{ANNTri} outperforms the common technique of triangulation that works by estimating each point's distance to a set of anchors. In Fig.~\ref{fig:ordinal_embedding}, we show that \texttt{ANNTri} likewise outperforms Euclidean embedding for predicting which images are most similar from a set of similarity judgments collected on Amazon~Mechanical~Turk. The rest of the paper is organized as follows. In Section \ref{sec:setup}, we further setup the problem. In Sections \ref{sec:alg} and \ref{sec:analysis} we present the algorithm and analyze its theoretical properties. In Section \ref{sec:exps} we show \texttt{ANNTri}'s empirical performance on both simulated and real data. In particular, we highlight its efficiency in learning from human judgments.

\section{Problem setup and summary of our approach}\label{sec:setup}
We denote distances as $d_{i,j}$ 
where $d : \X \times \X \rightarrow \R_{\geq 0}$ is a distance function satisfying the standard axioms and define $x_{i^\ast} := \arg\min_{x \in \X \setminus \{x_i\} }d(x_i, x)$. 
Though the distances are unknown, we are able to draw independent samples of its true value according to a stochastic distance oracle, i.e.\ querying
\begin{equation}\label{eq:oracle}
\query{i}{j}\;\;\; \text{ yields a realization of } \;\;\; d_{i, j} + \eta,
\end{equation}
where $\eta$ 
is a zero-mean subGaussian random variable assumed to have scale parameter $\sigma = 1$. 
We let $\hd_{i,j}(t)$ denote the empirical mean of the values returned by $\query{i}{j}$ queries made until time $t$. The number of $\query{i}{j}$ queries made until time $t$ is denoted as $T_{i,j}(t)$. 
A possible approach to obtain the nearest neighbor graph is to repeatedly query all $\binom{n}{2}$ pairs and report $x_{i^\ast}(t) = \arg\min_{j \neq i}\hd_{i,j}(t)$ for all $i \in [n]$. But since we only wish to learn $x_{i^\ast} \forall i$, if $d_{i,k} \gg d_{i, i^\ast}$, we do not need to query $\query{i}{k}$ as many times as $\query{i}{i^\ast}$. To improve our query efficiency, we could instead adaptively sample to focus queries on distances that we estimate are smaller. 
A simple adaptive method to find the nearest neighbor graph would be to iterate over $x_1, x_2, \ldots, x_n$ and use a best-arm identification algorithm to find $x_{i^\ast}$ in the $i^{th}$ round.\footnote{We could also proceed in a non-iterative manner, by adaptively choosing which among $\binom{n}{2}$ pairs to query next. However this has worse empirical performance and same theoretical guarantees as the in-order approach.} 
However, this procedure treats each round independently, ignoring properties of metric spaces that allow information to be shared between rounds. 
\begin{itemize}
\item Due to symmetry, for any $i < j$ the queries $\query{i}{j}$ and $\query{j}{i}$ follow the same law, and we can \emph{reuse} values of $\query{i}{j}$ collected in the $i^{th}$ round while finding $x_{j^\ast}$ in the $j^{th}$ round.
\item Using concentration bounds on $d_{i,j}$ and $d_{i,k}$ from samples of $\query{i}{j}$ and $\query{i}{k}$ collected in the $i^{th}$ round, we can bound $d_{j,k}$ via the triangle inequality. As a result, we may be able to state $x_k \neq x_{j^\ast}$ without even querying $\query{j}{k}$. 
\end{itemize}
Our proposed algorithm \texttt{ANNTri} uses all the above ideas to find the nearest neighbor graph of $\X$. For general $\X$, the sample complexity of \texttt{ANNTri} contains a problem-dependent term that involves the order in which the nearest neighbors are found. For an $\X$ consisting of sufficiently well separated clusters, this order-dependence for the sample complexity does not exist. 
\section{Algorithm}\label{sec:alg}
Our proposed algorithm (Algorithm~\ref{alg:ANNTri}) \texttt{ANNTri} finds the nearest neighbor graph of $\X$ with probability $1 - \delta$. 
It iterates over $x_j \in \X$ in order of their subscript index and finds $x_{j^\ast}$ in the $j^{th}$ `round'. 
All bounds, counts of samples, and empirical means are stored in $n \times n$ symmetric matrices in order to share information between different rounds. We use Python array/Matlab notation to indicate individual entries in the matrices, for e.g., $\hd[i,j] = \hd_{i,j}(t)$. The number of $\query{i}{j}$ queries made is queried is stored in the $(i,j)^{th}$ entry of $T$. Matrices $U$ and $L$ record upper and lower confidence bounds on $d_{i,j}$. $U^\triangle$ and $L^\triangle$ record the associated triangle inequality bounds. 
Symmetry is ensured by updating the $(j,i)^{th}$ entry at the same time as the $(i,j)^{th}$ entry for each of the above matrices. 
In the $j^{th}$ round, \texttt{ANNTri} finds the correct $x_{j^\ast}$ with probability $1 - \delta/n$ by calling \texttt{SETri} (Algorithm~\ref{alg:scoob}), a modification of the  successive elimination algorithm for best-arm identification. In contrast to standard successive elimination, 
at each time step \texttt{SETri} only samples those points in the active set 
that have the fewest number of samples.
%
\begin{algorithm}[tb]
   \caption{$\mathtt{ANNTri}$}
   \label{alg:ANNTri}
\begin{algorithmic}[1]
\REQUIRE $n$, procedure $\mathtt{SETri}$, \ref{alg:scoob}, confidence $\delta$
\STATE{Initialize $\hd, T$ as $n\times n$ matrices of zeros, $U, U^\triangle$ as $n \times n$ matrices where each entry is $\infty$, $L, L^\triangle$ as $n\times n$ matrices where each entry is $-\infty$, $\mathrm{NN}$ as a length $n$ array \label{ANNTri:init}}
\FOR{$j=1$ \TO $n$}
\FOR[find tightest triangle bounds]{$i=1$ \TO $n$} \label{alg:tri_start}
\FORALL{$k \neq i$}
\STATE Set $U^\triangle[i,k], \ U^\triangle[k,i], \leftarrow \min_{\ell}\TUi{\ell}{i}{k}$, see \eqref{eq:triUB} \label{alg:triUB_update}
\STATE Set $L^\triangle[i,k], \ L^\triangle[k,i] \leftarrow \max_{\ell}\TLi{\ell}{i}{k}$, see \eqref{eq:triLB} \label{alg:triLB_update}
\ENDFOR
\ENDFOR \label{alg:tri_end}
\STATE $\mathrm{NN}[j] = \mathtt{SETri}(j, \hd, U, U^\triangle, L, L^\triangle, T, \xi = \delta/n)$
\ENDFOR
\RETURN The nearest neighbor graph adjacency list $\mathrm{NN}$
\end{algorithmic}
\end{algorithm}

\begin{algorithm}[tb]
\caption{$\mathtt{SETri}$} \label{alg:scoob}
\begin{algorithmic}[1]
\REQUIRE index $j$, callable oracle $\query{\cdot}{\cdot}$ \eqref{eq:oracle}, six $n\times n$ matrices: $\hd$, $U$, $U^\triangle$, $L$, $L^\triangle$, $T$, confidence $\xi$
\STATE Initialize active set $\A_j \leftarrow \{a \neq j: \max\{L[a,j], L^\triangle[a,j]\} < \min_k \min\{U[j,k], U^\triangle[j,k]\}\}$ \label{scoob:init_A}
\WHILE{$|\A_j| > 1$}
\FORALL[only query points with fewest samples]{$i \in \A_j$ such that $T[i, j] = \min_{k\in \A_j} T[i, k]$}
\STATE Update $\hd[i, j], \ \hd[j, i] \leftarrow (\hd[i, j] \cdot T[i, j] + \query{i}{j})/(T[i, j] + 1)$
\STATE Update $T[i, j], \ T[j, i] \leftarrow T[i, j] + 1$
\STATE Update $U[i, j], \ U[j, i] \leftarrow \hd[i, j] + C_\xi(T[i, j])$
\STATE Update $L[i, j], \ L[j, i] \leftarrow \hd[i, j] - C_\xi(T[i, j])$
\ENDFOR
\STATE {Update $\A_j \leftarrow \{a \neq j: \max\{L[a,j], L^\triangle[a,j]\} < \min_k \min\{U[j,k], U^\triangle[j,k]\}\}$ \label{scoob:update_A}}
\ENDWHILE
\RETURN The index $i$ for which $x_i \in \A_j$
\end{algorithmic}
\end{algorithm}
\subsection{Confidence bounds on the distances}
Using the subGaussian assumption on the noise random process, we can use Hoeffding's inequality and a union bound over time to get the following confidence intervals on the distance $d_{j,k}$:
\begin{equation}\label{eq:C-def}
|\hd_{j,k}(t) - d_{j,k}| \leq \sqrt{2\frac{\log(4n^2(T_{j,k}(t))^2 / \delta)}{T_{j,k}(t)}}
=: C_{\delta/n}(T_{j,k}(t)),
\end{equation}
which hold for all points $x_k \in \X \setminus \{x_j\}$ at all times $t$ with probability $1 - \delta / n$, i.e.
\begin{equation}\label{eq:good-event}
\mathbb{P}(\forall t \in \mathbb{N}, \forall i \neq j, d_{i,j} \in [L_{i,j}(t), U_{i,j}(t)]) \geq 1 - \delta/n, 
\end{equation}
where $L_{i,j}(t) := \hd_{i,j}(t) - C_{\delta/n}(T_{i,j}(t))$ and $U_{i,j}(t) := \hd_{i,j}(t) + C_{\delta/n}(T_{i,j}(t))$. 
\cite{even2006action} use the above procedure to derive the following upper bound for the number of oracle queries used to find $x_{j^\ast}$:
\begin{equation}\label{eq:SE-complexity}
\O\left(\sum_{k \neq j} \frac{\log (n^2/(\delta \Delta_{j,k}))}{\Delta_{j,k}^2}\right),
\end{equation}
where for any $x_k \notin \{x_j, x_{j^\ast}\}$ the suboptimality gap $\Delta_{j,k} := d_{j,k} - d_{j,j^\ast}$ characterizes how hard it is to rule out $x_k$ from being the nearest neighbor. We also set $\Delta_{j,j^\ast} := \min_{k \notin \{j,j^\ast\}} \Delta_{j,k}$. 
Note that one can use tighter confidence bounds as detailed in \cite{garivier2013informational} and \cite{jamiesonN-survey} to obtain sharper bounds on the sample complexity of this subroutine. 

\subsection{Computing the triangle bounds and active set $\A_j(t)$}
Since $\A_j(\cdot)$ is only computed within \texttt{SETri}, we abuse notation and use its argument $t$ to indicate the time counter private to \texttt{SETri}. Thus, the initial active set computed by \texttt{SETri} when called in the $j^{th}$ round is denoted $\A_j(0)$. During the $j^{th}$ round, the active set $\A_j(t)$ contains all points that have not been eliminated from being the nearest neighbor of $x_j$ at time $t$. We define $x_j$'s active set at time $t$ as 
\begin{equation}\label{eq:active_set}
\A_j(t) := \{a \neq j: \max\{L_{a,j}(t), \TL{a}{j}(t)\} < \min_k \min\{U_{j,k}(t), \TU{j}{k}(t)\}\}.
\end{equation}
Assuming $\TL{a}{j}(t)$ and $\TU{j}{k}(t)$ are valid lower and upper bounds on $d_{a,j}, d_{j,k}$ respectively, \eqref{eq:active_set} states that point $x_a$ is active if its lower bound is less than the minimum upper bound for $d_{j,k}$ over all choices of $x_k \neq x_j$. 
Next, for any $(j,k)$ we construct triangle bounds $L^\triangle, U^\triangle$ on the distance $d_{j,k}$. 
Intuitively, for some reals $g,g',h,h'$, if $d_{i,j} \in [g, g']$ and $d_{i,k} \in [h,h']$ then $d_{j,k} \leq g' + h'$, and
\begin{align}\label{eq:intuitive-tri-lb}
d_{j,k} \geq |d_{i,j} - d_{i,k}| = \max\{d_{i,j}, d_{i,k}\} - \min\{d_{i,j}, d_{i,k}\}
\geq (\max\{g, h\} - \min\{g', h'\})_+
\end{align} 
where $(s)_+ {:=} \max\{s, 0\}$. The lower bound can be seen as true by Fig.~\ref{fig:tribounds} in the Appendix. 
Lemma~\ref{lem:tri-bounds} uses these ideas to form upper and lower bounds on distances by the triangle inequality. 
\begin{lemma}\label{lem:tri-bounds}
For all $k \neq 1$, set $\TUi{1}{1}{k}(t) = \TU{1}{k}(t) := U_{1,k}(t)$. For any $i < j$ define 
\begin{align}
\TUi{i}{j}{k}(t) := \min_{\max\{i_1, i_2\}<i}(\min \{U_{i,j}(t), \TUi{i_1}{i}{j}(t) \} + \min \{U_{i,k}(t), \TUi{i_2}{i}{k}(t) \}). \label{eq:triUB}
\end{align}
For all $k \neq 1$, set $\TLi{1}{1}{k}(t) = \TL{1}{k}(t) := L_{1,k}(t)$. For any $i < j$ define
\begin{align}
 \TLi{i}{j}{k}(t) := \max_{\max\{i_1, i_2, i_3, i_4\} < i}& \Big(\max\{L_{i,j}(t), \TLi{i_1}{i}{j}(t), L_{i, k}(t), \TLi{i_2}{i}{k}(t)\}  \nonumber \\
& \;\; - \min\{U_{i,j}(t), \TUi{i_3}{i}{j}(t), U_{i,k}(t), \TUi{i_4}{i}{k}(t)\}\Big)_+, \label{eq:triLB}
\end{align}
where $(s)_+ := \max\{s, 0\}$. If all the bounds obtained by \texttt{SETri} in rounds $i < j$ are correct then 
\begin{equation*}
d_{j,k} \in \big[\TL{j}{k}(t), \TU{j}{k}(t) \big], \quad \text{ where } \quad
\TL{j}{k}(t) := \max_{i < j}\TLi{i}{j}{k}(t) \quad \text{ and } \quad
\TU{j}{k}(t) := \min_{i < j}\TUi{i}{j}{k}(t).
\end{equation*}
\end{lemma}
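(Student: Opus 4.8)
The plan is to establish, by strong induction on the \emph{pivot index}, the following per-pivot claim: on the event $\mathcal{E}$ that every direct confidence interval $[L_{a,b}(t),U_{a,b}(t)]$ with $\min\{a,b\}<j$ brackets $d_{a,b}$ at all times $t$, each $\TUi{i}{p}{q}(t)$ is an upper bound on $d_{p,q}$ and each $\TLi{i}{p}{q}(t)$ is a lower bound on $d_{p,q}$. Two observations reduce the lemma to this claim. First, $\mathcal{E}$ is implied by the hypothesis of the lemma: unwinding \eqref{eq:triUB} and \eqref{eq:triLB} for $\TU{j}{k}$ and $\TL{j}{k}$, every direct bound $U_{a,b}(t)$ or $L_{a,b}(t)$ that appears has $\min\{a,b\}$ no larger than the pivot of the recursion level at which it occurs, and the pivots strictly decrease along the recursion from some starting value $<j$; hence $\min\{a,b\}<j$, so the bound was produced by \texttt{SETri} in a round $<j$ and is therefore correct. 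Second, granting the per-pivot claim, the lemma is immediate: intersecting the valid intervals $[\TLi{i}{j}{k}(t),\TUi{i}{j}{k}(t)]$ over all $i<j$ gives $d_{j,k}\in\big[\max_{i<j}\TLi{i}{j}{k}(t),\ \min_{i<j}\TUi{i}{j}{k}(t)\big]=[\TL{j}{k}(t),\TU{j}{k}(t)]$.

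The base case is pivot index $1$: the defining relations give $\TUi{1}{1}{k}(t)=U_{1,k}(t)\ge d_{1,k}$ and $\TLi{1}{1}{k}(t)=L_{1,k}(t)\le d_{1,k}$ directly from $\mathcal{E}$, while for the remaining (degenerate) triples the optimization ranges over the empty index set and returns the trivially valid $+\infty$ (upper) and $(\cdot)_+=0$ (lower). For the inductive step fix a pivot $i$ and assume the per-pivot claim for every smaller pivot. The upper bound is the short direction: for any $i_1,i_2<i$ the triangle inequality gives $d_{p,q}\le d_{i,p}+d_{i,q}$, and since $d_{i,p}\le U_{i,p}(t)$ by $\mathcal{E}$ and $d_{i,p}\le\TUi{i_1}{i}{p}(t)$ by the induction hypothesis (applicable because $i_1<i$), we get $d_{i,p}\le\min\{U_{i,p}(t),\TUi{i_1}{i}{p}(t)\}$, and symmetrically for $d_{i,q}$; adding these and minimizing over $i_1,i_2<i$ is exactly \eqref{eq:triUB} and shows $d_{p,q}\le\TUi{i}{p}{q}(t)$.

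The lower bound uses the reverse triangle inequality together with nonnegativity of $d$, namely $d_{p,q}\ge\big(\max\{d_{i,p},d_{i,q}\}-\min\{d_{i,p},d_{i,q}\}\big)_+$. Fix $i_1,i_2,i_3,i_4<i$. Since $d_{i,p}$ is at least each of $L_{i,p}(t)$ (by $\mathcal{E}$) and $\TLi{i_1}{i}{p}(t)$ (induction hypothesis), and $d_{i,q}$ is at least each of $L_{i,q}(t)$ and $\TLi{i_2}{i}{q}(t)$, the larger of $d_{i,p},d_{i,q}$ is at least $\max\{L_{i,p}(t),\TLi{i_1}{i}{p}(t),L_{i,q}(t),\TLi{i_2}{i}{q}(t)\}$; symmetrically, using the upper confidence bounds and the triangle-upper bounds, the smaller of $d_{i,p},d_{i,q}$ is at most $\min\{U_{i,p}(t),\TUi{i_3}{i}{p}(t),U_{i,q}(t),\TUi{i_4}{i}{q}(t)\}$. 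Subtracting these, taking the positive part (legitimate since $d_{p,q}\ge0$), and maximizing over $i_1,\dots,i_4<i$ reproduces \eqref{eq:triLB} and gives $d_{p,q}\ge\TLi{i}{p}{q}(t)$, which closes the induction.

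The mathematical content is thus essentially two applications of the triangle inequality (once forward, once in reverse) fed into the induction hypothesis; the only place that needs care is the bookkeeping. One must verify that the recursion in \eqref{eq:triUB}--\eqref{eq:triLB} is well-founded, so that strong induction on the pivot index is legitimate: each recursive call uses a strictly smaller pivot and still obeys the index constraint of the definition. One must also check that the constraints $\max\{i_1,i_2,\dots\}<i$ and $i<j$ propagate down the recursion, so that no confidence bound from a round $\ge j$ is ever invoked; this is exactly what makes the lemma's hypothesis (that all bounds from rounds $<j$ are correct) strong enough to cover every quantity appearing in $\TU{j}{k}(t)$ and $\TL{j}{k}(t)$.
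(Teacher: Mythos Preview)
Your proof is correct and follows essentially the same approach as the paper: apply the forward triangle inequality for the upper bound and the reverse triangle inequality for the lower bound, feeding in the tightest available (direct or recursive) bounds on $d_{i,p}$ and $d_{i,q}$. The paper's proof is more informal and leaves the inductive structure implicit, whereas you make the strong induction on the pivot index explicit and add the bookkeeping check that only confidence bounds from rounds $<j$ are ever invoked; this extra care is welcome but not a different route.
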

The proof is in Appendix~\ref{subsec:supp_tri}. \texttt{ANNTri} has access to two sources of bounds on distances: concentration bounds and triangle inequality bounds, and as can be seen in Lemma~\ref{lem:tri-bounds}, the former affects the latter. Furthermore, triangle bounds are computed from other triangle bounds, leading to the recursive definitions of $\TLi{i}{j}{k}$ and $\TUi{i}{j}{k}$. Because of these facts, triangle bounds are dependent on the order in which \texttt{ANNTri} finds each nearest neighbor. 
These bounds can be computed using dynamic programming and brute force search over all possible $i_1, i_2, i_3, i_4$ is not necessary. 
We note that the above recursion is similar to the Floyd-Warshall algorithm for finding  shortest paths between all pairs of vertices in a weighted graph \cite{floyd1962algorithm, cormen2009introduction}. The results in \cite{singla2016actively} show that the triangle bounds obtained in this manner have the minimum $L_1$ norm between the upper and lower bound matrices.

\section{Analysis}\label{sec:analysis}
All omitted proofs of this section can be found in the Appendix Section~\ref{sec:analysis-proofs}. 
\begin{theorem}\label{thm:correctness}
\texttt{ANNTri} finds the nearest neighbor for each point in $\X$ with probability $1-\delta$.
\end{theorem}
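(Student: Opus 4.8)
The plan is a union bound over the $n$ rounds, with Lemma~\ref{lem:tri-bounds} used to certify the triangle bounds. For $j \in [n]$ let $\E_j$ denote the event that every concentration interval built while handling the $j$-th point is valid, i.e.\ $d_{i,j} \in [L_{i,j}(t), U_{i,j}(t)]$ for all $i \neq j$ and all times $t$; this is exactly the event in \eqref{eq:good-event}, and since $\mathtt{SETri}$ is invoked in round $j$ with confidence parameter $\xi = \delta/n$ we have $\P(\E_j) \ge 1 - \delta/n$. Setting $\E := \bigcap_{j=1}^{n} \E_j$, a union bound gives $\P(\E) \ge 1 - \delta$, and the remainder of the argument is deterministic given $\E$, so it suffices to show that on $\E$ the array $\mathrm{NN}$ returned by $\mathtt{ANNTri}$ is the correct nearest-neighbor adjacency list.

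On $\E$ every concentration bound that $\mathtt{ANNTri}$ ever computes is simultaneously valid, so for each $j$ the hypothesis of Lemma~\ref{lem:tri-bounds} (``all bounds obtained by $\mathtt{SETri}$ in rounds $i<j$ are correct'') is met, and the lemma yields $d_{j,k} \in [\TL{j}{k}(t), \TU{j}{k}(t)]$ for all $j$, all $k \neq j$, and all $t$. The only subtlety is non-circularity: the recursions \eqref{eq:triUB}--\eqref{eq:triLB} defining the round-$j$ triangle bounds reference only the concentration quantities $U_{i,\cdot}, L_{i,\cdot}$ and triangle quantities from rounds $i < j$, all of which are already covered by $\E$; in particular $\E$ does not require $\mathtt{SETri}$ to have returned the correct index in earlier rounds, only its confidence intervals to be valid, so there is no dependence loop and no explicit induction on $j$ is needed. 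Combining the two families of bounds, on $\E$ we have for every $j$, every time $t$ inside $\mathtt{SETri}(j,\dots)$, and every $a \neq j$, the inequalities $\max\{L_{a,j}(t), \TL{a}{j}(t)\} \le d_{j,a}$ and $\min_{k}\min\{U_{j,k}(t), \TU{j}{k}(t)\} \ge \min_k d_{j,k} = d_{j,j^\ast}$.

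It remains to show that on $\E$ each call $\mathtt{SETri}(j,\dots)$ terminates and returns the index $j^\ast$. First, $x_{j^\ast}$ is never eliminated from $\A_j(t)$: by the two inequalities just displayed, the left-hand quantity in \eqref{eq:active_set} is at most $d_{j,j^\ast}$ while the right-hand quantity is at least $d_{j,j^\ast}$, and because the half-width $C_{\delta/n}(T)$ is strictly positive for every $T \ge 1$ and $x_{j^\ast}$ is the \emph{unique} nearest neighbor of $x_j$ (so $\Delta_{j,k} > 0$ for all $k \notin \{j, j^\ast\}$), this inequality is in fact strict for $a = j^\ast$, whence $x_{j^\ast} \in \A_j(t)$ for all $t$. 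Second, fix any $k \notin \{j, j^\ast\}$. While $|\A_j(t)| > 1$, $\mathtt{SETri}$ keeps sampling the active points with the fewest observations, so both $T_{k,j}(t)$ and $T_{j^\ast,j}(t)$ grow without bound; since $C_{\delta/n}(T) \to 0$, once $2C_{\delta/n}(T_{k,j}(t)) + 2C_{\delta/n}(T_{j^\ast,j}(t)) < \Delta_{j,k}$ we obtain, on $\E$, $L_{k,j}(t) \ge d_{j,k} - 2C_{\delta/n}(T_{k,j}(t)) > d_{j,j^\ast} + 2C_{\delta/n}(T_{j^\ast,j}(t)) \ge U_{j^\ast,j}(t) \ge \min_{\ell}\min\{U_{j,\ell}(t), \TU{j}{\ell}(t)\}$, so $x_k$ is removed from the active set. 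Since there are finitely many such $k$, after finitely many iterations $|\A_j(t)| = 1$, and by the first point the surviving element is $x_{j^\ast}$; hence $\mathrm{NN}[j] = j^\ast$. As $j$ was arbitrary, on $\E$ every entry of $\mathrm{NN}$ is correct, and $\P(\E) \ge 1 - \delta$, which is the claim.

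I expect the real work to be in the second paragraph — promoting the \emph{conditional} guarantee of Lemma~\ref{lem:tri-bounds} to an \emph{unconditional} one on $\E$ without circular reasoning — together with the two small technical points of the last paragraph: making the active-set inequality strict for $x_{j^\ast}$ (using uniqueness of the nearest neighbor and positivity of $C_{\delta/n}$), and verifying that the nonstandard ``sample the fewest-observed active arm'' rule of $\mathtt{SETri}$ cannot stall, so that every active suboptimal arm is eventually eliminated.
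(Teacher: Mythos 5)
Your proof is correct and takes essentially the same route as the paper: a union bound giving validity of all concentration intervals with probability $1-\delta$, followed by the observation that the triangle bounds of Lemma~\ref{lem:tri-bounds} are deterministic functions of earlier-round bounds (the paper packages this as an induction over rounds, you as conditioning on the global event --- same content), and finally the argument that on the good event $x_{j^\ast}$ cannot be eliminated while every other point eventually is. Your added termination argument and your explicit handling of strictness via positive half-widths and uniqueness of the nearest neighbor go slightly beyond the paper's own (somewhat informal) contradiction step, but do not change the approach.
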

\subsection{A simplified algorithm}\label{subsec:relax}
The following Lemma indicates which points must be eliminated initially in the $j^{th}$ round. 
\begin{lemma}\label{lem:easy-elim}
If $\exists i : \ 2U_{i,j} < L_{i,k}$, then $x_k \notin \A_j(0)$ for \texttt{ANNTri}.
\end{lemma}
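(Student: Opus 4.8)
The plan is to show that the hypothesis makes the stored triangle lower bound on $d_{j,k}$ strictly larger than the threshold $\min_{k'}\min\{U_{j,k'},\TU{j}{k'}\}$ that appears in \eqref{eq:active_set}, so that the membership test defining $\A_j(0)$ rejects $a=k$. All quantities below are evaluated at the start of round $j$, when $\A_j(0)$ is formed.

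The first step is to deduce $i<j$ from the hypothesis. The stored bound $L_{i,k}$ is never $+\infty$ (it equals $-\infty$ until first updated, and afterward is an empirical mean minus a finite radius), so $2U_{i,j}<L_{i,k}$ forces $U_{i,j}<\infty$; by the convention on $C_{\delta/n}$, a finite $U_{i,j}$ means $T_{i,j}\ge 1$, i.e.\ $\query{i}{j}$ was queried before round $j$. But in round $m$, \texttt{SETri} issues only queries of the form $\query{a}{m}$, so $\query{i}{j}$ can have been issued only in round $i$ or round $j$; having occurred before round $j$, it occurred in round $i$, and hence $i<j$. In particular $i\ne j$, so $i$ is an admissible index for the inner minimum in \eqref{eq:active_set}.

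With $i<j$ in hand, the estimate is short. For the upper side, taking $k'=i$ and using the symmetry $U_{j,i}=U_{i,j}$ maintained by \texttt{ANNTri} gives $\min_{k'}\min\{U_{j,k'},\TU{j}{k'}\}\le U_{i,j}$. For the lower side, since $i<j$ the point $x_i$ is available as an intermediate vertex when \texttt{ANNTri} forms the triangle bound on $d_{j,k}$ at the top of round $j$; keeping only $x_i$ and the trivial recursive choices, \eqref{eq:triLB} (equivalently Lemma~\ref{lem:tri-bounds}) gives $L^\triangle[k,j]\ge \TLi{i}{j}{k}\ge (L_{i,k}-U_{i,j})_+$, which is just $d_{j,k}\ge|d_{i,j}-d_{i,k}|$ read off from $d_{i,j}\le U_{i,j}$ and $d_{i,k}\ge L_{i,k}$. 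Chaining the two sides with the hypothesis, $\max\{L_{k,j},\TL{k}{j}\}\ge (L_{i,k}-U_{i,j})_+\ge L_{i,k}-U_{i,j}>2U_{i,j}-U_{i,j}=U_{i,j}\ge \min_{k'}\min\{U_{j,k'},\TU{j}{k'}\}$, so $x_k$ fails the condition of \eqref{eq:active_set} and $x_k\notin\A_j(0)$.

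I expect the only real friction to be that first step: the triangle lower bound is built as a maximum over intermediate indices strictly below $j$, so routing through $x_i$ is legitimate precisely when $i<j$, and extracting $i<j$ from the hypothesis requires noticing that $2U_{i,j}<L_{i,k}$ silently forces $U_{i,j}$ finite, hence at least one query to $\query{i}{j}$, hence (by the structure of \texttt{SETri}) that round $i$ precedes round $j$. The rest is bookkeeping: the $(\cdot)_+$ truncation is harmless since $\max\{s,0\}\ge s$, and one should note that \texttt{ANNTri}'s symmetric updates of $U,L,U^\triangle,L^\triangle$ are what make $L^\triangle[k,j]$ the quantity that actually enters \eqref{eq:active_set}.
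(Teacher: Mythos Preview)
Your proof is correct and follows the same route as the paper: both show $U_{i,j} < L_{i,k} - U_{i,j} \le \TLi{i}{j}{k}$, then compare against $\min_{k'}\min\{U_{j,k'},\TU{j}{k'}\}\le U_{i,j}$ to exclude $x_k$ from $\A_j(0)$. Your argument is more careful in that you explicitly derive $i<j$ from the finiteness of $U_{i,j}$ (needed to route through $x_i$ in \eqref{eq:triLB}), a point the paper's one-line proof leaves implicit.
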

\begin{proof} \vspace{-1em}
 $2U_{i,j} < L_{i,k} \iff U_{i,j} < L_{i,k} - U_{i,j} \leq \TLi{i}{j}{k}$
\vspace{-1em}
 \end{proof}
Next, we define \texttt{ANNEasy}, a simplified version of \texttt{ANNTri} that is more amenable to analysis. Here, we say that $x_k$ is eliminated in the $j^{th}$ round of \texttt{ANNEasy} if i) $ k{ <} j $ and $\exists i: U_{i,j} < L_{j,k}$ (symmetry from past samples) or ii) $\exists  i: 2U_{i,j} < L_{i,k}$ (Lemma~\ref{lem:easy-elim}). Therefore, $x_j$'s active set for \texttt{ANNEasy} is 
\begin{equation}\label{eq:easy-active}
\A_j = \{a {\neq} j: L_{a,k} \leq 2U_{j,k} \ \forall k \ \text{ and } L_{a,j} < \min_kU_{j,k}\}. 
\end{equation}
To define \texttt{ANNEasy} in code, we remove lines~\ref{alg:tri_start}-\ref{alg:tri_end} of \texttt{ANNTri} (Algorithm~\ref{alg:ANNTri}), and call a subroutine \texttt{SEEasy} in place of \texttt{SETri}. \texttt{SEEasy} matches \texttt{SETri} (Algorithm~\ref{alg:scoob}) except that lines~\ref{scoob:init_A} and \ref{scoob:update_A} are replaced with \eqref{eq:easy-active} instead. We provide full pseudocode of both \texttt{ANNEasy} and \texttt{SEEasy} in the Appendix \ref{subsec:easy_pseudo}. Though \texttt{ANNEasy} is a simplification for analysis, we note that it empirically captures much of the same behavior of \texttt{ANNTri}. In the Appendix~\ref{subsec:anneasy_vs_anntri} we provide an empirical comparison of the two. 

\subsection{Complexity of \texttt{ANNEasy}}\label{subsec:complex}

We now turn our attention to account for the effect of the triangle inequality in \texttt{ANNEasy}. 

\begin{lemma}\label{lem:lower-bound-samples}
For any $x_k \in \X$ if the following conditions hold for some $i < j$, then $x_k \notin \A_j(0)$.
\begin{equation}\label{eq:lower-bound-samples}
6C_{\delta/n}(1) \leq d_{i,k} - 2d_{i,j} \quad \text{and} \quad
\{j,k\} \cap (\cup_{m < i}\{\ell : 2d_{m,i} < d_{m,\ell}\}) = \emptyset.
\end{equation}
\end{lemma}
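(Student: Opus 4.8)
The plan is to show that condition \eqref{eq:lower-bound-samples} forces the deterministic (sample-count-independent) versions of the bounds used by \texttt{ANNEasy} to trigger elimination of $x_k$ before any samples are drawn in the $j^{th}$ round. Recall from \eqref{eq:easy-active} that $x_k \notin \A_j(0)$ for \texttt{ANNEasy} as soon as either $L_{a,k} > 2U_{j,k}$ for some $a$ (here with $a=i$, reading indices symmetrically), or equivalently via Lemma~\ref{lem:easy-elim}, as soon as $2U_{i,j} < L_{i,k}$ at time $0$ of the $j^{th}$ round. So the entire task reduces to lower-bounding $L_{i,k}$ and upper-bounding $U_{i,j}$ at the moment \texttt{SEEasy} is called in round $j$, using only the information available then.

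First I would identify how many samples of $\query{i}{j}$ and $\query{i}{k}$ have been collected by the start of round $j$. The second condition in \eqref{eq:lower-bound-samples}, namely $\{j,k\}\cap(\cup_{m<i}\{\ell:2d_{m,\ell}>d_{m,\ell}\ \text{fails}\})=\emptyset$, is exactly the statement that neither $x_j$ nor $x_k$ was eliminated by the ``easy'' rule using any earlier pivot $m<i$; consequently both $x_j$ and $x_k$ survive into the active set of round $i$, and by the ``only query points with fewest samples'' rule in \texttt{SEEasy}, each of $\query{i}{j}$ and $\query{i}{k}$ receives at least one sample during round $i$. (I would state this carefully: being in $\A_i(0)$ together with $x_i$ being the round index means $T[i,j]\ge 1$ and $T[i,k]\ge 1$ once round $i$ completes, hence certainly by the start of round $j$.) On the good event \eqref{eq:good-event}, the width of each concentration interval is then at most $2C_{\delta/n}(1)$, since $C_{\delta/n}(\cdot)$ is decreasing in its argument.

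Next I would chain the inequalities. On the good event, $U_{i,j} \le d_{i,j} + 2C_{\delta/n}(1)$ and $L_{i,k}\ge d_{i,k} - 2C_{\delta/n}(1)$ once those pairs have $\ge 1$ sample. Therefore
\[
L_{i,k} - 2U_{i,j} \ \ge\ d_{i,k} - 2d_{i,j} - 2C_{\delta/n}(1) - 4C_{\delta/n}(1) \ =\ (d_{i,k} - 2d_{i,j}) - 6C_{\delta/n}(1) \ \ge\ 0
\]
by the first condition in \eqref{eq:lower-bound-samples}. Hence $2U_{i,j} < L_{i,k}$ (or $\le$, which one should handle by checking whether \texttt{ANNEasy}'s elimination uses a strict inequality — if strict, the assumption $6C_{\delta/n}(1)\le d_{i,k}-2d_{i,j}$ gives a non-strict conclusion, so I would either assume strictness in \eqref{eq:lower-bound-samples} or absorb a harmless constant factor), and Lemma~\ref{lem:easy-elim} (equivalently the definition \eqref{eq:easy-active}) gives $x_k\notin\A_j(0)$.

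The main obstacle, and the step deserving the most care, is the bookkeeping in the second paragraph: precisely arguing that the ``no earlier pivot eliminated $j$ or $k$'' hypothesis implies both $x_j,x_k\in\A_i(0)$ and that \texttt{SEEasy} in round $i$ actually draws at least one sample of the pairs $(i,j)$ and $(i,k)$ rather than terminating first. This requires unwinding the active-set definition \eqref{eq:easy-active} across rounds and invoking the minimum-sample sampling rule, and it is where an off-by-one in the round indices or a subtle case (e.g. $|\A_i(0)|=1$ so the while-loop never executes) could sneak in. Everything after that is the short deterministic chaining above, valid on the event \eqref{eq:good-event} of probability at least $1-\delta/n$.
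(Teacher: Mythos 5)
Your proposal is correct and follows essentially the same route as the paper: the second condition of \eqref{eq:lower-bound-samples}, on the good event, guarantees $\{x_j,x_k\}\subseteq \A_i(0)$ (since the round-$i$ elimination set is contained in $\cup_{m<i}\{\ell : 2d_{m,i}<d_{m,\ell}\}$), hence $T_{i,j},T_{i,k}\geq 1$, and then chaining $U_{i,j}\leq d_{i,j}+2C_{\delta/n}(1)$ with $L_{i,k}\geq d_{i,k}-2C_{\delta/n}(1)$ against the first condition yields $2U_{i,j}\leq L_{i,k}$ and elimination by Lemma~\ref{lem:easy-elim}. The only cosmetic difference is that the paper packages the final chaining step as a separate lemma (Lemma~\ref{lem:tri-elim}, via Lemma~\ref{lem:CI_bounds}) while you do the arithmetic inline, and the strict-versus-nonstrict inequality point you flag is present in the paper's own argument as well.
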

The first condition characterizes which $x_k$'s must satisfy the condition in Lemma~\ref{lem:easy-elim} for the $j^{th}$ round. The second guarantees that $x_k$ was sampled in the $i^{th}$ round, a necessary condition for forming triangle bounds using $x_i$. 
\begin{theorem}\label{thm:ANNTri-complex}
Conditioned on the event that all confidence bounds are valid at all times, \texttt{ANNEasy} learns the nearest neighbor graph of $\X$ in the following number of calls to the distance oracle:
\begin{align}\label{eq:general-complex}
\mathcal{O}\left( \sum_{j=1}^n \sum_{k > j} \mathds{1}_{[A_{j,k}]} H_{j,k} 
+ \sum_{k<j} \mathds{1}_{[A_{j,k}]}(H_{j,k} - \mathds{1}_{[A_{k,j}]} H_{k,j})_+ \right).
\end{align}
In the above expression $H_{j,k}: = \frac{\log (n^2/(\delta \Delta_{j,k}))}{\Delta_{j,k}^2}$ and $\mathds{1}_{[A_{j,k}]} := 1$, if $x_k$ does not satisfy the triangle inequality elimination conditions of \eqref{eq:lower-bound-samples} $\forall i < j$, and $0$ otherwise. 
\end{theorem}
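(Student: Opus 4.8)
The plan is to bound, for each round $j$, the number of oracle queries spent inside the call to \texttt{SEEasy}, and then sum over $j$. The central fact I will use is that \texttt{SEEasy} runs a successive-elimination procedure on the active set $\A_j(0)$ with confidence parameter $\xi = \delta/n$, so by the standard analysis (equation~\eqref{eq:SE-complexity}, and the refinement that queries are only ever made on indices that remain active) the queries issued in round $j$ are, up to constants, $\sum_{k \in \A_j(0)} H_{j,k}$, where $H_{j,k}$ is as defined in the statement. Thus the work reduces to (i) identifying which $x_k$ lie in $\A_j(0)$ and (ii) accounting carefully for re-use of samples across rounds due to symmetry.

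\textbf{Step 1 (membership in the initial active set).} First I would show that $x_k \notin \A_j(0)$ whenever $x_k$ satisfies the triangle elimination conditions of \eqref{eq:lower-bound-samples} for some $i < j$; this is exactly Lemma~\ref{lem:lower-bound-samples}, so on the good event every surviving $k$ has $\mathds{1}_{[A_{j,k}]} = 1$. Hence the round-$j$ cost is at most $\O\big(\sum_{k : \mathds{1}_{[A_{j,k}]} = 1} H_{j,k}\big)$, and I can already split this sum into the $k > j$ part and the $k < j$ part.

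\textbf{Step 2 (accounting for symmetry / sample re-use).} For $k < j$, the pair $\{j,k\}$ may have already been queried during round $k$ (when \texttt{SEEasy} was finding the nearest neighbor of $x_k$), and by the symmetry updates in \texttt{SEEasy} those samples are stored in $T[j,k]$ and carry over. The number of such queries made in round $k$ on the pair $\{j,k\}$ is, on the good event, of order $H_{k,j}$ whenever $x_j \in \A_k(0)$, i.e.\ whenever $\mathds{1}_{[A_{k,j}]} = 1$; otherwise no carried-over samples are guaranteed. Therefore the \emph{new} queries needed in round $j$ to resolve the pair $\{j,k\}$ are at most $(H_{j,k} - \mathds{1}_{[A_{k,j}]} H_{k,j})_+$, which is exactly the term appearing in \eqref{eq:general-complex}. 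The $(\cdot)_+$ is there because the number of additional samples cannot be negative: if the pair was already sampled enough in round $k$, round $j$ pays nothing extra for it. I would make the ``$H_{k,j}$ samples carried over suffice to save $H_{k,j}$ work in round $j$'' step precise by comparing the confidence-width thresholds $C_{\delta/n}(\cdot)$ used in the two rounds (they are identical), so a sample budget that shrank the confidence interval below the round-$k$ gap also shrinks it proportionally for round $j$'s purposes up to the claimed constant.

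\textbf{Step 3 (summation).} Summing the round-$j$ bounds over $j = 1, \ldots, n$ gives $\O\big(\sum_j \sum_{k>j} \mathds{1}_{[A_{j,k}]} H_{j,k} + \sum_j \sum_{k<j} \mathds{1}_{[A_{j,k}]} (H_{j,k} - \mathds{1}_{[A_{k,j}]} H_{k,j})_+\big)$, which is \eqref{eq:general-complex}. Everything is conditioned on the good event \eqref{eq:good-event} (intersected over all $n$ rounds), which is assumed in the theorem statement, so no failure-probability bookkeeping is needed here.

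\textbf{Main obstacle.} The delicate point is Step 2: arguing rigorously that the samples inherited from round $k$ translate into a matching reduction in round-$j$ work, rather than merely a reduction by some uncontrolled fraction. This requires relating the stopping condition of \texttt{SEEasy} in round $k$ (which depends on the gap $\Delta_{k,j}$ and on \emph{other} arms in $\A_k$) to how many samples on the specific pair $\{j,k\}$ actually accumulated, and then showing that this many samples is what round $j$ would itself have needed (which depends instead on $\Delta_{j,k}$). Because successive elimination over-samples arms with small gaps and the ``fewest-samples-first'' rule in \texttt{SEEasy} further couples the per-pair counts, one must be careful that the per-pair sample count in round $k$ is at least of order $H_{k,j}$ and not just the minimum over the active set; I would handle this by invoking the same per-arm lower/upper sample bounds that underlie \eqref{eq:SE-complexity} applied to arm $j$ in round $k$.
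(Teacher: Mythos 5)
Your overall structure (per-round successive-elimination cost restricted to $\A_j(0)$, with Lemma~\ref{lem:lower-bound-samples} identifying the points eliminated for free, then summing over rounds) is the same as the paper's. The genuine gap is exactly the point you flag in Step~2 and then propose to patch with a per-arm \emph{lower} bound on the round-$k$ sample count: no such lower bound is available. Conditioned only on the validity of the confidence bounds, an arm can be eliminated in round $k$ after far fewer than $H_{k,j}$ queries of the pair (the empirical means need only sit favorably inside their confidence bands), and \eqref{eq:SE-complexity} is an upper bound with no matching per-arm lower bound to invoke. Consequently your per-round claim that the new queries in round $j$ are at most $(H_{j,k} - \mathds{1}_{[A_{k,j}]}H_{k,j})_+$ is not justified: if round $k$ happened to spend only $\#\query{k}{j} \ll H_{k,j}$ samples on the pair, round $j$ may legitimately need up to $(H_{j,k} - \#\query{k}{j})_+ > (H_{j,k} - H_{k,j})_+$ further queries.

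The paper's proof avoids needing any lower bound by accounting for each unordered pair \emph{jointly} across the two rounds rather than round by round. Writing $\#\query{k}{j}$ for the actual number of round-$k$ queries of the pair, the fewest-samples-first rule of \texttt{SEEasy} gives that round $j$ makes at most $(H_{j,k} - \#\query{k}{j})_+$ additional queries, so the total for the pair is $\max\{H_{j,k}, \#\query{k}{j}\} \leq \max\{H_{j,k}, H_{k,j}\} = H_{k,j} + (H_{j,k}-H_{k,j})_+$, which is exactly the sum of the two terms that \eqref{eq:general-complex} attributes to rounds $k$ and $j$. In other words, the displayed expression is valid as a bound on the \emph{combined} cost (the round-$k$ term $H_{k,j}$ over-counts by at least as much as the round-$j$ term may under-count), even though your per-round reading of it is not. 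With that accounting change your argument goes through; Steps 1 and 3 of your proposal match the paper.
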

In Theorem~\ref{thm:notri-complex}, in the Appendix, we state the sample complexity when triangle inequality bounds are ignored by \texttt{ANNTri}, and this upper bounds \eqref{eq:general-complex}. 
Whether a point can be eliminated by the triangle inequality depends both on the underlying distances and the order in which \texttt{ANNTri} finds each nearest neighbor (\textit{c.f.}\ Lemma~\ref{lem:lower-bound-samples}). In general, this dependence on the order is necessary to ensure that past samples exist and may be used to form upper and lower bounds. Furthermore, it is worth noting that even without noise the triangle inequality may not always help. A simple example is any arrangement of points such that $0 < r \leq d_{j,k} < 2r \ \forall j,k$. To see this, consider triangle bounds on any distance $d_{j,k}$ due to any $x_i, x_{i'} \in \X \backslash\{x_j,x_k\}$. Then $|d_{i, j} - d_{i,k}| \leq r < 2r \leq d_{i', j} + d_{i',k} \ \forall i, i'$ so $\TL{i}{j} < \TU{j}{k} \ \forall i,j,k$. Thus no triangle upper bounds separate from triangle lower bounds so no elimination via the triangle inequality occurs. In such cases, it is necessary to sample all $\O(n^2)$ distances. However, in more favorable settings where data may be split into clusters, the sample complexity can be much lower by using triangle inequality. 
\subsection{Adaptive gains via the triangle inequality}\label{sec:active_gains}
\begin{figure}
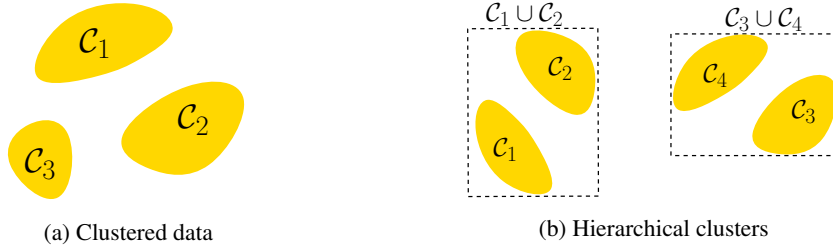

\centering
\begin{subfigure}{.4\textwidth}
  \centering
  \includegraphics[width=.6\linewidth]{cluster_schematic.pdf}
  \caption{Clustered data}
  \label{fig:clustered_data}
\end{subfigure}%
\begin{subfigure}{.6\textwidth}
  \centering
  \includegraphics[width=.6\linewidth]{cluster_heirarchy_schematic.pdf}
  \caption{Hierarchical clusters}
  \label{fig:hierarchical_data}
\end{subfigure}
\caption{Example datasets where triangle inequalities lead to provable gains.}
\label{fig:example_data}
\vspace{-1em}
\end{figure}
We highlight two settings where \texttt{ANNTri} provably achieves sample complexity better than $\O(n^2)$ independent of the order of the rounds. Consider a dataset containing $c$ clusters of $n/c$ points each as in Fig.~\ref{fig:clustered_data}. 
Denote the $m^{th}$ cluster as $\C_m$ and 
suppose the distances between the points are such that
\begin{equation}\label{eq:clustered-dataset}
\{x_k: d_{i,k} < 6C_{\delta/n}(1) + 2d_{i,j}\} \subseteq \C_m \ \forall i, j \in \C_m.
\end{equation}
The above condition is ensured if the distance between any two points belonging to different clusters is at least a $(\delta, n)$-dependent constant plus twice the diameter of any cluster. 
\begin{theorem}\label{thm:sqrt_n}
Consider a dataset of $\sqrt{n}$ clusters which satisfy the condition in \eqref{eq:clustered-dataset}. Then \texttt{ANNEasy} learns the correct nearest neighbor graph of $\X$ with probability at least $1-\delta$ in 
\begin{equation}\label{eq:clust-complex-thm}
\O\left(n^{3/2}\overline{\Delta^{-2}}\right)
\end{equation}
queries where $\overline{\Delta^{-2}} := \frac{1}{n^{3/2}}\sum_{i=1}^{\sqrt{n}}\sum_{j, k \in \C_i} \log (n^2/(\delta \Delta_{j,k}))\Delta_{j,k}^{-2}$ is the average number of samples distances between points in the same cluster. 
\end{theorem}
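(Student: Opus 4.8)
The plan is to specialize the general complexity bound of Theorem~\ref{thm:ANNTri-complex} to a dataset with $\sqrt{n}$ clusters, and show that the clustering condition \eqref{eq:clustered-dataset} forces $\mathds{1}_{[A_{j,k}]} = 0$ for every pair $(j,k)$ lying in different clusters, so that only the intra-cluster pairs contribute to the sum. First I would fix a round $j$ belonging to cluster $\C_m$ and take any $x_k$ with $k$ in a different cluster. I need to exhibit some $i < j$ witnessing the two conditions in \eqref{eq:lower-bound-samples}, namely $6C_{\delta/n}(1) \leq d_{i,k} - 2d_{i,j}$ together with $\{j,k\} \cap (\cup_{m' < i}\{\ell : 2d_{m',i} < d_{m',\ell}\}) = \emptyset$. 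The natural candidate for $i$ is the smallest index in $\C_m$ (so $i \leq j$, and in the degenerate case $i=j$ the point $x_k$ is already eliminated by the earlier-round symmetry argument of \texttt{ANNEasy} or is trivially outside $\A_j(0)$ via Lemma~\ref{lem:easy-elim} applied with $i=j$ itself; I would handle $i=j$ by instead choosing any other already-sampled point of $\C_m$, which exists once the cluster has been partially processed). Since $i, j \in \C_m$ and $k \notin \C_m$, the contrapositive of \eqref{eq:clustered-dataset} gives $d_{i,k} \geq 6C_{\delta/n}(1) + 2d_{i,j}$, which is exactly the first condition.

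The second condition requires that neither $x_j$ nor $x_k$ was ever "pruned away" from $x_i$ in an earlier round, i.e. that for all $m' < i$ we do not have $2d_{m',i} < d_{m',j}$ or $2d_{m',i} < d_{m',k}$. For $x_j$ this is immediate: $i$ and $j$ are in the same cluster $\C_m$, so by \eqref{eq:clustered-dataset} applied in the form "points close to $x_{m'}$-relative-to-$x_i$ under the $2d_{m',i}$ threshold stay in the cluster," the index $j$ cannot be excluded — more carefully, I would argue that any $m'$ with $2d_{m',i} < d_{m',j}$ would have to satisfy $m' \in \C_m$ as well (by the cluster condition read with roles $i,m' \in$ same cluster $\Leftrightarrow$), but then $d_{m',j}$ and $d_{m',i}$ are both intra-cluster distances bounded by the diameter while the threshold $2d_{m',i}$ would need $d_{m',j} > 2 d_{m',i}$, and this is compatible, so the cleaner route is: the cluster condition \eqref{eq:clustered-dataset} guarantees $x_i$ and $x_j$ were kept in the active set through round $\min(i,j)$ because the separation assumption ensures all of $\C_m$ is sampled together. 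I will phrase the invariant as: under \eqref{eq:clustered-dataset}, in every round $r \leq j$ with $r \in \C_m$, the active set $\A_r(0)$ contains exactly $\C_m \setminus \{x_r\}$ minus points already removed by symmetry, and in particular all cross-cluster points are already gone. This invariant, proved by induction on $r$, simultaneously handles both the "$x_j$ and $x_k$ were sampled in round $i$" clause and the final counting.

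Once the invariant is in place, the conclusion is a short computation. Every cross-cluster pair $(j,k)$ has $\mathds{1}_{[A_{j,k}]} = 0$, so \eqref{eq:general-complex} collapses to $\mathcal{O}\big(\sum_{m=1}^{\sqrt n}\sum_{j,k \in \C_m} H_{j,k}\big)$, and since each cluster has $n/\sqrt n = \sqrt n$ points there are $\sqrt n$ clusters each contributing a sum over $O(n)$ intra-cluster pairs; substituting the definition $H_{j,k} = \log(n^2/(\delta\Delta_{j,k}))\Delta_{j,k}^{-2}$ and the definition of $\overline{\Delta^{-2}}$ gives exactly $\O(n^{3/2}\,\overline{\Delta^{-2}})$. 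The correctness-with-probability-$1-\delta$ claim is inherited directly from Theorem~\ref{thm:correctness}, since \texttt{ANNEasy} uses the same confidence bounds and conditioning as \texttt{ANNTri}. The main obstacle I anticipate is making the induction in the previous paragraph fully rigorous: one must be careful that the threshold appearing in Lemma~\ref{lem:lower-bound-samples}'s second condition ($2d_{m,i} < d_{m,\ell}$) interacts correctly with the cluster-separation hypothesis \eqref{eq:clustered-dataset} (which is stated with the constant $6C_{\delta/n}(1) + 2d_{i,j}$, not merely $2d_{i,j}$), and that the "$i = j$ or $i$ not yet sampled" edge cases do not break the argument — these require choosing the witness index $i$ within $\C_m$ with a little care, but involve no heavy machinery.
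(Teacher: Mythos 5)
Your overall strategy (specialize Theorem~\ref{thm:ANNTri-complex} by showing cross-cluster pairs are eliminated without sampling) is close in spirit to the paper, but it has a genuine gap exactly at the point you flag as an ``edge case'': the first point of each cluster. Your claimed invariant --- that in every round the initial active set already excludes all cross-cluster points --- is false when $x_j$ is the smallest-index point of its cluster $\C_m$. For such a $j$ there is no $i<j$ with $i\in\C_m$, so no intra-cluster witness exists for Lemma~\ref{lem:lower-bound-samples}, and a cross-cluster witness cannot satisfy $6C_{\delta/n}(1)\le d_{i,k}-2d_{i,j}$ in general (e.g.\ when $k$ lies in the witness's own cluster). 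Your proposed fix, ``choose any other already-sampled point of $\C_m$, which exists once the cluster has been partially processed,'' is vacuous precisely here, since the cluster has not been processed at all. Consequently $\mathds{1}_{[A_{j,k}]}=1$ for all cross-cluster $k$ in these rounds, and these rounds genuinely cost $\O(n)$ queries each, $\O(n^{3/2})$ in total. The paper's proof handles this head-on via Lemma~\ref{lem:two-cluster}: the complexity of a two-cluster instance is $\O(|\C_1|+|\C_2|+\H_{\C_1}+\H_{\C_2})$, i.e.\ the first point explored in each cluster must scan the whole dataset, and only subsequent points stay inside their cluster; the theorem then peels off clusters one at a time, accumulating an additive $cn$ penalty, giving $\O(cn+\sum_m\sum_{j,k\in\C_m}H_{j,k})$ with $c=\sqrt{n}$. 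Your final bound happens to survive only because the $n^{3/2}$ term you dropped is absorbed into $\O(n^{3/2}\overline{\Delta^{-2}})$, but the induction as you state it cannot be closed.

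A second, smaller gap: even for non-first points $x_j\in\C_m$, you never verify the second condition of Lemma~\ref{lem:lower-bound-samples} for the \emph{cross-cluster} point $x_k$. With the natural witness $i=$ the first point of $\C_m$, that condition demands $d_{m',k}\le 2d_{m',i}$ for every earlier-round point $m'<i$, which is not implied by \eqref{eq:clustered-dataset} --- the cluster condition lower-bounds inter-cluster separation but says nothing about ratios of two inter-cluster distances (clusters may sit at wildly different distances from $m'$). The paper avoids this by arguing directly about active-set membership ($x_j,x_k\in\A_i(0)$ because round $i$'s active set provably contained them, as in the proof of Lemma~\ref{lem:two-cluster}) rather than routing everything through the purely distance-based sufficient condition \eqref{eq:lower-bound-samples}. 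To repair your argument you would either need to prove the active-set containment directly, as the paper does, or strengthen the hypothesis beyond \eqref{eq:clustered-dataset}.
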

By contrast, random sampling requires $\O(n^2\Delta_{\text{min}}^{-2})$ where $\Delta_{\text{min}}^{-2} := \min_{j,k}\log (n^2/(\delta \Delta_{j,k}))\Delta_{j,k}^{-2} \geq \overline{\Delta^{-2}}$.
In fact, the value in \eqref{eq:general-complex} be be even lower if unions of clusters also satisfy \eqref{eq:clustered-dataset}. In this case, the triangle inequality can be used to separate \emph{groups} of clusters. For example, in 
Fig.~\ref{fig:hierarchical_data}, if $\C_1 \cup \C_2$ and $\C_3 \cup \C_4$ satisfy \eqref{eq:clustered-dataset} along with $\C_1, \cdots, \C_4$, then the triangle inequality can separate $\C_1 \cup \C_2$ and $\C_3 \cup \C_4$. 
This process can be generalized to consider a dataset that can be split recursively into into subclusters following a binary tree of $k$ levels. At each level, the clusters are assumed to satisfy \eqref{eq:clustered-dataset}. 
We refer to such a dataset as \emph{hierarchical in \eqref{eq:clustered-dataset}}. 

\begin{theorem}\label{lem:nlogn}
Consider a dataset $\X = \cup_{i=1}^{n/\nu}\C_i$ of $n/\nu$ clusters of size $\nu = \O(\log(n))$ that is hierarchical in \eqref{eq:clustered-dataset}. 
Then \texttt{ANNEasy} learns the correct nearest neighbor graph of $\X$ with probability at least $1-\delta$ in 
\begin{equation}\label{eq:tree-complex}
\O\left(n\log(n)\overline{\Delta^{-2}}\right)
\end{equation}
queries where $\overline{\Delta^{-2}} := \frac{1}{n\nu}\sum_{i=1}^{n/\nu}\sum_{j, k \in \C_i} \log (n^2/(\delta \Delta_{j,k}))\Delta_{j,k}^{-2}$ is the average number of samples distances between points in the same cluster. 
\end{theorem}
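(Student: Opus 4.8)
The plan is to reduce Theorem~\ref{lem:nlogn} to a recursive application of the reasoning behind Theorem~\ref{thm:sqrt_n}, exploiting the fact that a hierarchical dataset satisfies \eqref{eq:clustered-dataset} at every level of the binary tree. First I would fix the good event that all confidence bounds are valid at all times (which costs the union-bound factor absorbed into $\delta$), so that Theorem~\ref{thm:ANNTri-complex} applies and it suffices to bound the expression \eqref{eq:general-complex}. The key structural claim is: for any $x_j$ and any $x_k$ lying in a different leaf cluster than $x_j$, there is some round $i<j$ and some point $x_i$ in $x_j$'s leaf cluster (or in a common ancestor cluster that still excludes $x_k$) for which the two conditions of \eqref{eq:lower-bound-samples} hold, so that $\mathds{1}_{[A_{j,k}]}=0$. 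The second condition of \eqref{eq:lower-bound-samples}---that $x_k$ was actually sampled in round $i$---is where the hierarchy is essential: walking up the tree from $x_j$'s leaf, the first ancestor cluster $\C$ that contains $x_k$ has two children, one containing $x_j$ and one containing $x_k$; choosing $x_i$ to be the earliest-indexed point in $x_j$'s child sub-cluster, the separation hypothesis \eqref{eq:clustered-dataset} applied at the level of $\C$'s children gives the first (geometric) condition, while a short induction on the round index shows that by round $i$ all of $x_i$'s sibling sub-cluster---including $x_k$---had been queried against $x_i$, giving the second condition. Hence in \eqref{eq:general-complex} only pairs $(j,k)$ within a common leaf cluster survive the indicators, and the double sum collapses to $\sum_{i=1}^{n/\nu}\sum_{j,k\in\C_i} H_{j,k}$, which is exactly $n\nu\,\overline{\Delta^{-2}} = \O(n\log n)\,\overline{\Delta^{-2}}$ by the definition of $\overline{\Delta^{-2}}$ and $\nu=\O(\log n)$.

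To organize this, I would proceed in the following steps. Step~1: state and prove the auxiliary claim that for $x_j,x_k$ in different leaf clusters, and $x_i$ the first-indexed point of the smallest sub-cluster containing $x_j$ but not $x_k$, condition one of \eqref{eq:lower-bound-samples} holds---this is immediate from \eqref{eq:clustered-dataset} since $x_k\notin\C$ forces $d_{i,k}\ge 6C_{\delta/n}(1)+2d_{i,j}$, using that $x_i,x_j$ lie in the relevant cluster. Step~2: prove by induction on $j$ that if $x_a,x_b$ belong to a common cluster at some level that is ``active together'' (i.e.\ neither has been eliminated against the other by an earlier triangle bound), then $\mathsf{Q}(a,b)$ is queried during the $\min(a,b)$-th round; this uses the structure of \texttt{SEEasy}'s active set \eqref{eq:easy-active} together with Lemma~\ref{lem:lower-bound-samples} to argue that within-cluster points are never prematurely eliminated. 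Step~3: combine Steps~1--2 to conclude $\mathds{1}_{[A_{j,k}]}=0$ for all cross-cluster pairs, so \eqref{eq:general-complex} is at most $\O\big(\sum_i\sum_{j,k\in\C_i}H_{j,k}\big)$. Step~4: substitute the definition of $\overline{\Delta^{-2}}$ and $\nu=\O(\log n)$ to get \eqref{eq:tree-complex}.

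I expect the main obstacle to be Step~2, the bookkeeping that guarantees the ``$x_k$ was sampled in round $i$'' condition in \eqref{eq:lower-bound-samples} is genuinely met along the tree. The subtlety is circular-sounding: we want to say $x_i$ and $x_k$ were compared in round $i$, but that itself requires that $x_k$ had not already been eliminated from $\A_i(0)$ by some still-earlier round; resolving this needs a careful induction ordered by round index, showing simultaneously that (a) within any cluster of the hierarchy, no point eliminates another via the triangle inequality before that cluster's own round, and (b) consequently every within-cluster pair accumulates at least one sample by the time the lower-indexed endpoint's round begins. The hypothesis $\nu=\O(\log n)$ is what keeps the surviving within-leaf work at $\O(n\log n\,\overline{\Delta^{-2}})$ rather than $\O(n\nu\,\overline{\Delta^{-2}})$ with a larger $\nu$, and the hierarchical assumption at \emph{every} level---rather than just at the leaves---is exactly what is needed to find, for each cross-cluster pair, a witness $x_i$ at the appropriate scale. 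A secondary, more routine point is verifying that the constant $6C_{\delta/n}(1)$ appearing in \eqref{eq:clustered-dataset} and \eqref{eq:lower-bound-samples} is the same constant throughout, so that the separation assumption plugs directly into Lemma~\ref{lem:lower-bound-samples} without loss.
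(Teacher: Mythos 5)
There is a genuine gap in your Step~2/Step~3 claim that \emph{every} cross-leaf-cluster pair $(j,k)$ has $\mathds{1}_{[A_{j,k}]}=0$. Lemma~\ref{lem:lower-bound-samples} requires a witness $x_i$ with $i<j$, and your construction takes $x_i$ to be the earliest-indexed point of the smallest sub-cluster containing $x_j$ but not $x_k$. When $x_j$ is itself that earliest-indexed point, no such witness exists; the extreme case is $j=1$, where no round precedes round $1$, so $\mathds{1}_{[A_{1,k}]}=1$ for all $k$ and all $n-1$ cross-cluster distances must be queried. The same issue recurs at every level of the tree: the first point processed in each child sub-cluster must sample the sibling sub-cluster (at least once per point) before any triangle bound involving that sub-cluster can exist, and a witness lying elsewhere in the tree is not guaranteed by \eqref{eq:clustered-dataset} to satisfy $d_{i,k}\geq 2d_{i,j}+6C_{\delta/n}(1)$. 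Consequently the double sum in \eqref{eq:general-complex} does not collapse to the within-leaf terms only; there is an unavoidable cross-cluster contribution of order $n$ per level of the hierarchy, i.e.\ $n\log(n/\nu)$ in total.

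This is exactly what the paper's proof tracks and your accounting omits. The paper first proves Lemma~\ref{lem:two-cluster}: for $\X=\C_1\cup\C_2$ with both clusters satisfying \eqref{eq:clustered-dataset}, the complexity is $\O\left(|\C_1|+|\C_2|+\H_{\C_1}+\H_{\C_2}\right)$, the additive $|\C_1|+|\C_2|$ being precisely the cost incurred by the first point explored in each cluster, for which the triangle inequality cannot yet help. It then applies this lemma recursively down the binary tree, paying an additive penalty of $n$ at each of the $\log(n/\nu)$ levels, arriving at $\O\left(n\log(n/\nu)+n\nu\,\overline{\Delta^{-2}}\right)$, which gives \eqref{eq:tree-complex} once $\nu=\O(\log n)$. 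Your final expression happens to be of the same order only because the omitted term $n\log(n/\nu)$ is itself $\O(n\log n)$, but the argument as written does not establish it, and the claimed vanishing of all cross-cluster indicators is false. Your Step~1 (the geometric condition via \eqref{eq:clustered-dataset} at the level of the lowest common ancestor) and your concern about verifying that $x_k$ was actually sampled in round $i$ are sound and mirror the paper's use of the second condition in \eqref{eq:lower-bound-samples}; what is missing is the boundary case of first-in-subtree points and the resulting per-level additive cost, which is the reason the paper routes the proof through the two-cluster lemma and a recursion rather than through a pointwise indicator-collapse in \eqref{eq:general-complex}.
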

Expression \eqref{eq:tree-complex} matches known lower bounds of $\O(n\log(n))$ on the sample complexity for learning the nearest neighbor graph from noiseless samples \citep{vaidya1989ano}, the additional penalty of $\overline{\Delta^{-2}}$ is due to the effect of noise in the samples. 
In Appendix \ref{subsec:avg_perf}, we state the sample complexity in the average case, as opposed to the high probability statements above. The analog of the cluster condition \eqref{eq:clustered-dataset} there does not involve constants and is solely in terms of pairwise distances (\textit{c.f.}\ \eqref{eq:clustered-dataset-avg}).

\section{Experiments}\label{sec:exps}
Here we evaluate the performance of \texttt{ANNTri} on simulated and real data. To construct the tightest possible confidence bounds for \texttt{SETri}, we use the law of the iterated logarithm as in \cite{jamiesonN-survey} with parameters $\epsilon=0.7$ and $\delta=0.1$. Our analysis bounds the number of queries made to the oracle. We visualize the performance by tracking the empirical \emph{error rate} with the number of queries made per point. For a given point $x_i$, we say that a method makes an error at the $t^{th}$ sample if it fails to return $x_{i^\ast}$ as the nearest neighbor, that is, $x_{i^\ast} \neq \arg\min_j \hat{d}[i, j]$. Throughout, we will compare \texttt{ANNTri} against random sampling. Additionally, to highlight the effect of the triangle inequality, we will compare our method against the same active procedure, but ignoring triangle inequality bounds (referred to as \texttt{ANN} in plots). All baselines may reuse samples via symmetry as well. We plot all curves with $95\%$ confidence regions shaded. 

\subsection{Simulated Experiments}\label{subsec:sims}
We test the effectiveness of our method, we generate an embedding of $10$ clusters of $10$ points spread around a circle such that each cluster is separated by at least $10\%$ of its diameter in $\R^2$ as in shown in Fig.~\ref{fig:sep.1_sig.1:sub1}. We consider Gaussian noise with $\sigma=0.1$. 
In Fig.~\ref{fig:sep.1_sig.1:sub2}, we present average error rates of \texttt{ANNTri}, \texttt{ANN}, and \texttt{Random} plotted on a log scale. \texttt{ANNTri} quickly learns $x_{i^\ast}$ and has lower error with $0$ samples due to initial elimination by the triangle inequality. The error curves are averaged over $4000$ repetitions. All rounds were capped at $10^5$ samples for efficiency. 

\begin{figure}
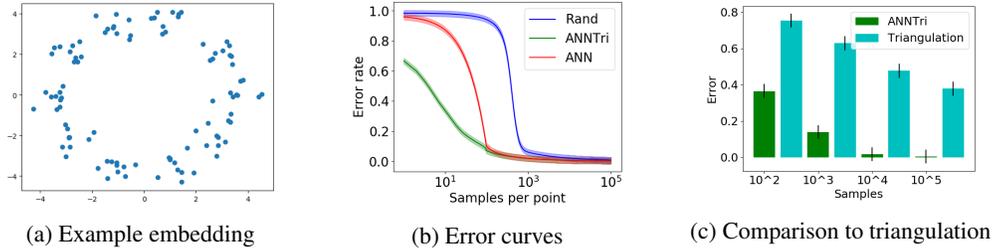

\centering
\begin{subfigure}{.33\textwidth}
  \centering
  \includegraphics[width=.8\linewidth]{data_sep_pt1.png}
  \caption{Example embedding}
  \label{fig:sep.1_sig.1:sub1}
\end{subfigure}%
\begin{subfigure}{.33\textwidth}
  \centering
  \includegraphics[width=.8\linewidth]{errs_sep_pt1.png}
  \caption{Error curves}
  \label{fig:sep.1_sig.1:sub2}
\end{subfigure}
\begin{subfigure}{.33\textwidth}
  \centering
  \includegraphics[width=.8\linewidth]{triangulation_sig1_bar.png}
	\caption{Comparison to triangulation}
  \label{fig:triangulation}
\end{subfigure}
\label{fig:sep.1_sig.1}
\caption{Comparison of \texttt{ANNTri} to \texttt{ANN} and \texttt{Random} for $10$ clusters of $10$ points separated by $10\%$ of their diameter with $\sigma=0.1$. \texttt{ANNTri} identifies clusters of nearby points more easily.}
\vspace{-1em}
\end{figure}


\subsubsection{Comparison to triangulation}
An alternative way a practitioner may use to obtain the nearest neighbor graph might be to estimate distances with respect to a few anchor points and then triangulate to learn the rest. \cite{eriksson2010learning} provide a comprehensive example, and we summarize in Appendix \ref{subsec:tri_supp} for completeness. 
The triangulation method is na\"{i}ve for two reasons. First, it requires \emph{much} stronger modeling assumptions than \texttt{ANNTri}--- namely that the metric is Euclidean and the points are in a low-dimensional of known dimension. Forcing Euclidean structure can lead to unpredictable errors if the underlying metric might not be Euclidean, such as in data from human judgments. Second, this procedure may be more noise sensitive because it estimates squared distances. In the example in Section~\ref{subsec:tri_supp}, this leads to the additive noise being sub-exponential rather than subGaussian. In Fig.~\ref{fig:triangulation}, we show that even in a favorable setting where distances are truly sampled from a low-dimensional Euclidean embedding and pairwise distances between anchors are known exactly, triangulation still performs poorly compared to \texttt{ANNTri}. We consider the same $2$-dimensional embedding of points as in Fig.~\ref{fig:sep.1_sig.1:sub1} for a noise variance of $\sigma=1$ and compare the \texttt{ANNTri} and triangulation for different numbers of samples. 

\subsection{Human judgment experiments}\label{subsec:zappos}
\subsubsection{Setup}
Here we consider the problem of learning from human judgments. 
For this experiment, we used a set $\X$ of $85$ images of shoes drawn from the UT~Zappos50k dataset \cite{finegrained, semjitter} and 
seek to learn which shoes are most visually similar. 
To do this, we consider queries of the form ``between $i$, $j$, and $k$, which two are most similar?''. We show example queries in Figs.~\ref{fig:easy_query} and \ref{fig:hard_query} in the Appendix. Each query maps to a pair of triplet judgments of the form ``is $j$ or $k$ more similar to $i$?''. For instance, if $i$ and $j$ are chosen, then we may imply the judgments ``$i$ is more similar to $j$ than to $k$'' and ``$j$ is more similar to $i$ than to $k$''. We  therefore construct these queries from a set of triplets collected from participants on Mechanical Turk by \cite{heim2015active}. The set contains multiple samples of all $85 {84 \choose 2}$ unique triples so that the probability of any triplet response can be estimated. We expect that $i^\ast$ is most commonly selected as being more similar to $i$ than any third point $k$. We take distance to correspond to the fraction of times that two images $i$, $j$ are judged as being more similar to each other than a different pair in a triplet query $(i, j, k)$. Let $E_{i,k}^j$ be the event that the pair $i,k$ are chosen as most similar amongst $i$, $j$, and $k$. Accordingly, we define the `distance' between images $i$ and $j$ as 
\begin{equation*}
d_{i,j}:= \arg \min_{j \neq i} \bbE_{k\sim\text{Unif}(\X\backslash \{i,j\})}\bbE[\mathds{1}_{E^j_{i,k} } | k]
\end{equation*}
 where $k$ is drawn uniformly from the remaining $83$ images in $\X \backslash \{i,j\}$. For a fixed value of $k$, 
\begin{align*}
\bbE[\mathds{1}_{E^j_{i,k} } | k] = \P(E^j_{i,k} ) = \P(\text{``$i$ more similar to $j$ than to $k$''})\P(\text{``$j$ more similar to $i$ than to $k$''}).
\end{align*}
where the probabilities are the empirical probabilities of the associated triplets in the dataset. This distance is 
a quasi-metric on our dataset as it does not always satisfy the triangle inequality; but  satisfies it with a multiplicative constant: $d_{i,j} \leq 1.47(d_{i,k} + d_{j,k}) \ \forall i,j,k$. Relaxing metrics to quasi-metrics has a rich history in the classical nearest neighbors literature \cite{houle2015rank, tschopp2011randomized, goyal2008disorder}, and \texttt{ANNTri} can be trivially modified to handle quasi-metrics. However, we empirically note that  $<1\%$ of the distances violate the ordinary triangle inequality here so we ignore this point in our evaluation.
\begin{figure}
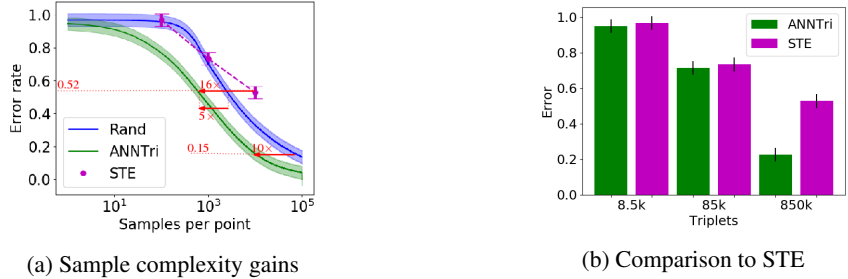

\begin{subfigure}{.5\textwidth}
  \centering
  \includegraphics[width=.6\linewidth]{zappos_oe_gains_2.png}
  \caption{Sample complexity gains}
  \label{fig:active_gains}
\end{subfigure}
\begin{subfigure}{.5\textwidth}
  \centering
  \includegraphics[width=.6\linewidth]{oe_comparison_top1.png}
  \caption{Comparison to STE}
  \label{fig:ordinal_embedding}
\end{subfigure}%
\caption{Performance of \texttt{ANNTri} on the Zappos dataset. \texttt{ANNTri} achieves superior performance over STE in identifying nearest neighbors and has $5-10$x gains in sample efficiency over random.}
\label{fig:zappos_queries}
\vspace{-1em}
\end{figure}

\subsubsection{Results}\label{subsec:zappos_results_main}
When \texttt{ANNTri} or any baseline queries $\query{i}{j}$ from the oracle, we randomly sample a third point $k \in \X \backslash \{i,j\}$ and flip a coin with probability $\P(E^j_{i,k})$. The resulting sample is an unbiased estimate of the distance between $i$ and $j$. 
In Fig.~\ref{fig:active_gains}, we compare the error rate averaged over $1000$ trials of \texttt{ANNTri} compared to \texttt{Random} and \texttt{STE}. We also plot associated gains in sample complexity by \texttt{ANNTri}. In particular, we see gains of $5-10$x over random sampling, and gains up to $16$x relative to ordinal embedding. 
\texttt{ANNTri} also shows $2$x gains over \texttt{ANN} in sample complexity (see Fig.~\ref{fig:zappos_err} in Appendix). 

Additionally, a standard way of learning from triplet data is to perform ordinal embedding. With a learned embedding, the nearest neighbor graph may easily be computed. In Fig.~\ref{fig:ordinal_embedding}, we compare \texttt{ANNTri} against the state of the art STE algorithm \cite{van2012stochastic} for estimating Euclidean embeddings from triplets, and select the embedding dimension of $d=16$ via cross validation. To normalize the number of samples, we first perform \texttt{ANNTri} with a given max budget of samples and record the total number needed. Then we select a random set of triplets of the same size and learn an embedding in $\R^{16}$ via STE. We compare both methods on the fraction of nearest neighbors predicted correctly. On the $x$ axis, we show the total number of triplets given to each method. For small dataset sizes, there is little difference, however, for larger dataset sizes, \texttt{ANNTri} significantly outperforms STE. 
Given that \texttt{ANNTri} is active, it is reasonable to wonder if STE would perform better with an actively sampled dataset, such as \citep{tamuz2011adaptively}. Many of these methods are computationally intensive and lack empirical support \citep{jamieson2015next}, but we can embed using the full set of triplets to mitigate the effect of the subsampling procedure. Doing so, STE achieves $52\%$ error, within the confidence bounds of the largest subsample shown in Fig. \ref{fig:ordinal_embedding}. In particular, more data and more carefully selected datasets, may not correct for the bias induced by forcing Euclidean structure.

\section{Conclusion}
In this paper we solve the nearest neighbor graph problem by adaptively querying distances. Our method makes no assumptions beyond standard metric properties and is empirically shown to achieve sample complexity gains over passive sampling. In the case of clustered data, we show provable gains and achieve optimal rates in favorable settings. 

\clearpage
\bibliographystyle{plainnat}
\bibliography{learning_nn_graphs}

\clearpage
\appendix
\section*{Appendix}\label{sec:supp}
\section{Additional experimental results and details}
\subsection{Differences between  \texttt{ANNTri} and \texttt{ANNEasy}}
\subsubsection{Pseudocode for \texttt{ANNEasy} and \texttt{SEEasy}}\label{subsec:easy_pseudo}
We begin by providing pseudocode for both \texttt{ANNEasy} and \texttt{SEEasy} as described in Section~\ref{subsec:relax} in Algorithms \ref{alg:ANNEasy} and \ref{alg:seeasy}. 
\begin{algorithm}[tb]
   \caption{$\mathtt{ANNEasy}$}
   \label{alg:ANNEasy}
\begin{algorithmic}[1]
\REQUIRE $n$, procedure $\mathtt{SEEasy}$, \ref{alg:seeasy}, confidence $\delta$
\STATE{Initialize $\hd, T$ as $n\times n$ matrices of zeros, $U$ as $n \times n$ matrix where each entry is $\infty$, $L$ as $n\times n$ matrix where each entry is $-\infty$, $\mathrm{NN}$ as a length $n$ array \label{ANNEasy:init}}
\FOR{$j=1$ \TO $n$}
\STATE $\mathrm{NN}[j] = \mathtt{SEEasy}(j, \hd, U, L, T, \xi = \delta/n)$
\ENDFOR
\RETURN The nearest neighbor graph adjacency list $\mathrm{NN}$
\end{algorithmic}
\end{algorithm}

\begin{algorithm}[tb]
\caption{$\mathtt{SEEasy}$} \label{alg:seeasy}
\begin{algorithmic}[1]
\REQUIRE index $j$, callable oracle $\query{\cdot}{\cdot}$ \eqref{eq:oracle}, 4 $n\times n$ matrices: $\hd$, $U$, $L$, $T$, confidence $\xi$
\STATE {Initialize the active set $\A_j \leftarrow \{a {\neq} j: L[a,k] \leq 2U[j,k] \ \forall k \ \text{ and } L[a,j] < \min_kU[j,k]\}$ \label{seeasy:init_A}}
\WHILE{$|\A_j| > 1$}
\FORALL[only query points with fewest samples]{$i \in \A_j$ such that $T[i, j] = \min_{k\in \A_j} T[i, k]$}
\STATE Update $\hd[i, j], \ \hd[j, i] \leftarrow (\hd[i, j] \cdot T[i, j] + \query{i}{j})/(T[i, j] + 1)$
\STATE Update $T[i, j], \ T[j, i] \leftarrow T[i, j] + 1$
\STATE Update $U[i, j], \ U[j, i] \leftarrow \hd[i, j] + C_\xi(T[i, j])$
\STATE Update $L[i, j], \ L[j, i] \leftarrow \hd[i, j] - C_\xi(T[i, j])$
\ENDFOR
\STATE {Update $\A_j \leftarrow \{a {\neq} j : L[a,k] \leq 2U[j,k] \ \forall k \ \text{ and } L[a,j] < \min_kU[a,k]\}$ \label{seeasy:update_A}}
\ENDWHILE
\RETURN The index $i$ for which $x_i \in \A_j$
\end{algorithmic}
\end{algorithm}

\subsubsection{Empirical differences in performance for \texttt{ANNTri} and \texttt{ANNEasy}}\label{subsec:anneasy_vs_anntri}
In Figure \ref{fig:relax_compare} we compare the empirical performance of \texttt{ANNTri} and \texttt{ANNEasy}. We compare their performance in the same setting as Figure \ref{fig:sep.1_sig.1:sub1} with $10$ clusters of $10$ points separated by their at least $10\%$ of their diameter. The curves are averaged over $4000$ independent trials and plotted with $95\%$ confidence regions. As is indicated in the plot, \texttt{ANNEasy} has similar behavior as \texttt{ANNTri}, but achieves slightly worse performance. 

\begin{figure}
\centering
  \includegraphics[width=.5\linewidth]{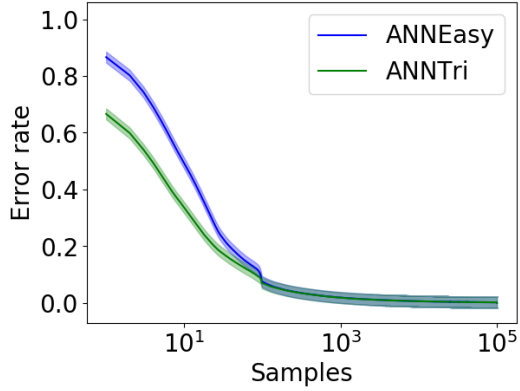}
\caption{Comparison of error in identifying $x_{i^\ast}$ \texttt{ANNTri} and the \texttt{ANNEasy} for $10$ clusters of $10$ points separated by $10\%$ of their diameter with $\sigma=0.1$.}
\label{fig:relax_compare}
\end{figure}


\subsection{Triangulation}\label{subsec:tri_supp}
In this section, we provide a brief review of triangulation to estimate Euclidean embeddings, similar to the presentation in \citep{eriksson2010learning}. The method is summarized as follows. Let $\X$ be a set of $n$ points in Euclidean $d$ space and $\bD$ be the associated Euclidean distance matrix where each entry is the square of the associated Euclidean distance. Let $A$ be a set of anchor points. Without loss of generality, we take $A := \{x_1, \cdots, x_{d+2}\}$. The $+2$ is to correct for the fact that Euclidean distance matrices have rank $d+2$. Let $\bA := \bD[1:d+2, 1:d+2]$ and $\bL := \bD[1:d+2, 1:n]$. Then it can easily be verified that $ \bD = \bL \bA^{-1} \bL^T$. To learn the entries in $\bL$ (as well as $\bA$), sample the distance from each of the $n$ points to the $d+2$ anchors as many times as there is budget for and square the results. The empirical mean is a plugin estimator of the associated entry in $\bL$ and $\bA$, and we take $\widehat{\bL}$ and $\widehat{\bA}$ to be their unbiased estimates. Therefore $\widehat{\bD} :=\widehat{\bL} \widehat{\bA}^{-1} \widehat{\bL}^T$ is an unbiased estimate of $\bD$. With $\widehat{\bD} $, the nearest neighbor graph can easily be computed.  

\subsection{Additional experimental results for Zappos dataset}
In Fig.~\ref{fig:zappos_queries} we show two example queries of the form ``which pair are most similar of these three?''. Some queries are more straightforward whereas some are more subjective. 
\begin{figure}
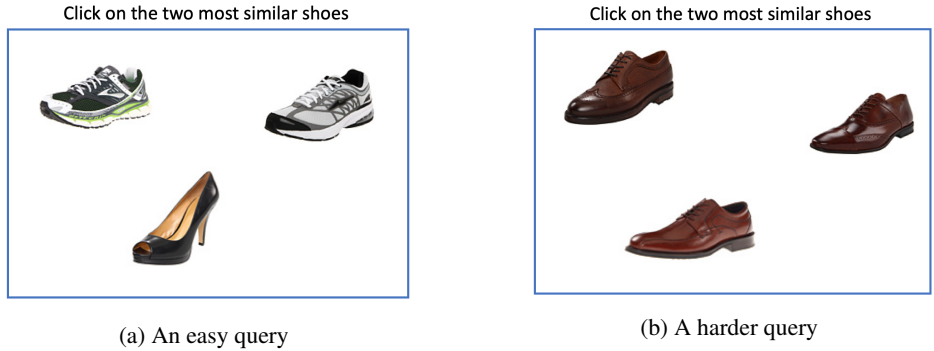

\centering
\begin{subfigure}{.5\textwidth}
  \centering
  \includegraphics[width=.8\linewidth]{query_easy}
  \caption{An easy query}
  \label{fig:easy_query}
\end{subfigure}%
\begin{subfigure}{.5\textwidth}
  \centering
  \includegraphics[width=.8\linewidth]{query_hard}
  \caption{A harder query}
  \label{fig:hard_query}
\end{subfigure}
\caption{Two example zappos queries. }
\label{fig:zappos_queries}
\end{figure}

\begin{figure}
\centering
	\includegraphics[width=0.8\textwidth]{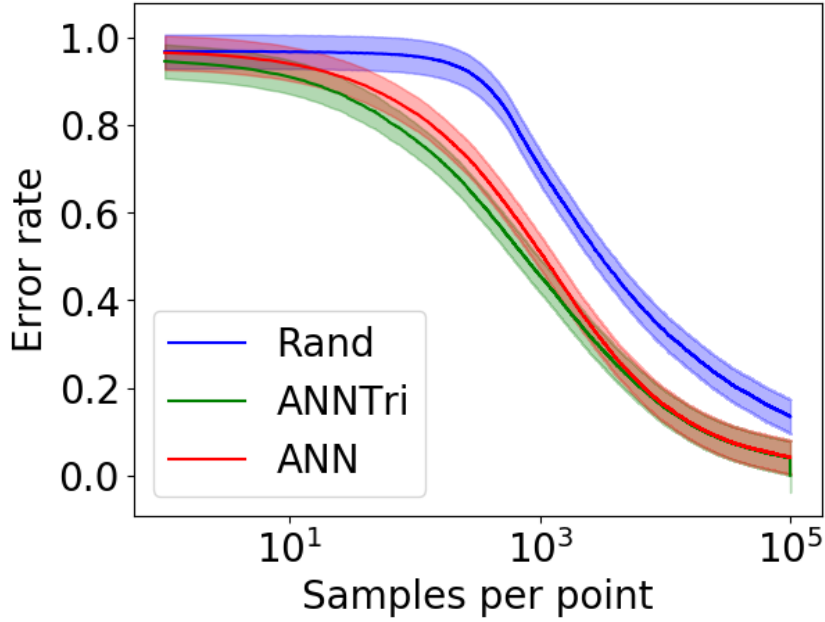}
	\caption{Error rates for nearest neighbor identification on Zappos Data}
	\label{fig:zappos_err}
\end{figure}

Additionally, in Fig.~\ref{fig:zappos_err}, we show the performance of \texttt{ANNTri}, \texttt{ANN}, and \texttt{Random} in identifying nearest neighbors from the Zappos data. In this setting, there is less of an advantage to using the triangle inequality due to the highly noisy and subjective nature of human judgments. Despite this, we still see a slight advantage to \texttt{ANNTri} over \texttt{ANN}. In particular, for moderate accuracy, there is a gain sample complexity of around $2$x. 

\section{Proofs and technical lemmas}\label{sec:analysis-proofs}
\subsection{Proof of Lemma~\ref{lem:tri-bounds}}\label{subsec:supp_tri}
By symmetry for all $i < j$, we have existing samples of $\query{i}{j}$ and $\query{i}{k}$ and we use bounds based on these samples as well as past triangle inequality upper bounds on $d_{i,j}$ and $d_{i,k}$ due to $i_1 < i$ and $i_2 < i$ respectively. The upper bound is derived as follows:
\begin{equation*}
d_{j,k} \leq d_{i,j} + d_{i,k} \leq  \min \{U_{i,j}(t), \TUi{i_1}{i}{j}(t) \} + \min \{U_{i,k}(t), \TUi{i_2}{i}{k}(t) \} =: \TUi{i}{j}{k}
\end{equation*}
Since we may form bounds based on all $i < j$ for which we have both samples of  $\query{i}{j}$ and $\query{i}{k}$, we may optimize over $i$ to get the tightest possible triangle inequality bounds on $d_{j,k}$. 

Lower bounds are derived similarly. Again, intuitively, we may use past samples of both $\query{i}{j}$ and $\query{i}{k}$ and associated bounds to derive a lower bound on $d_{j,k}$. The form is slightly more complicated here since we have to worry about both upper and lower bounds on $d_{i,j}$ and $d_{i,k}$. These bounds may either be from concentration bounds based on past samples directly or past triangle inequality upper and lower bounds on these distances due to points $i_1 - i_4 < i$. 
\begin{align*}
d_{j,k} \geq & |d_{i,j} - d_{i,k}| \\
		 = & \max\{d_{i,j}, d_{i, k}\} - \min\{d_{i,j}, d_{i, k}\} \\ 
		 \geq & (\max \{ \max \{L_{i,j}(t), \TLi{i_1}{i}{j}(t)\}, \ \max \{L_{i, k}(t), \TLi{i_2}{i}{k}(t)\} \}  \\
	& -  \min\{\min\{U_{i,j}(t), \TUi{i_3}{i}{j}(t)\}, \  \min\{U_{i,k}(t), \TUi{i_4}{i}{k}(t)\}\})_+ \\ 
	= & (\max\{L_{i,j}(t), \TLi{i_1}{i}{j}(t), L_{i, k}(t), \TLi{i_2}{i}{k}(t)\}  \\
& - \min\{U_{i,j}(t), \TUi{i_3}{i}{j}(t), U_{i,k}(t), \TUi{i_4}{i}{k}(t)\})_+
\end{align*}
where  $(s)_+ := \max\{s, 0\}$ and $i_1, i_2, i_3, i_4 < i$, (not necessarily unique) are chosen to optimize the bound. Similar to the upper bound, this holds with respect to any $i < j$ and we optimize over $i$. To ease presentation, let $\mathrm{UB}'[i, j] := \min\{U_{i,j}, \min_{l < i} \TUi{l}{i}{j}\}$ and $\mathrm{LB}'[i, j] := \max \{L_{i,j}, \max_{l < i}\TLi{l}{i}{j}\}$ be the tightest upper and lower bounds for $d_{i, j}$. For the lower bound, note that if the argument of $(\cdot)_+$ is negative, then any
\begin{align*}
s &\in [\max\{\mathrm{LB}'[i, j], \mathrm{LB}'[i, k]\}, \min\{\mathrm{UB}'[i, j], \mathrm{UB}'[i, k]\}] \\
&= [\mathrm{LB}'[i, j], \mathrm{UB}'[i, j]] \cap [\mathrm{LB}'[i, k], \mathrm{UB}'[i, k]] \neq \emptyset
\end{align*}
can be the value of both $d_{i, j}$ and $d_{j,k}$ as it lies in both their confidence intervals. Then points $x_j, x_k$ can possibly be at the same location in the metric space, in which case $d_{j, k} = 0$. On the other hand if the RHS is positive, then $x_j$ and $x_k$ cannot be at the same location as $d_{i, j} \neq d_{i, k}$. In fact, the smallest possible value for $d_{j, k}$ occurs if $x_i, x_j, x_k$ are collinear. This can be seen to be true from Figure \ref{fig:tribounds}.
\begin{figure}
\centering
\includegraphics[width=0.7\textwidth]{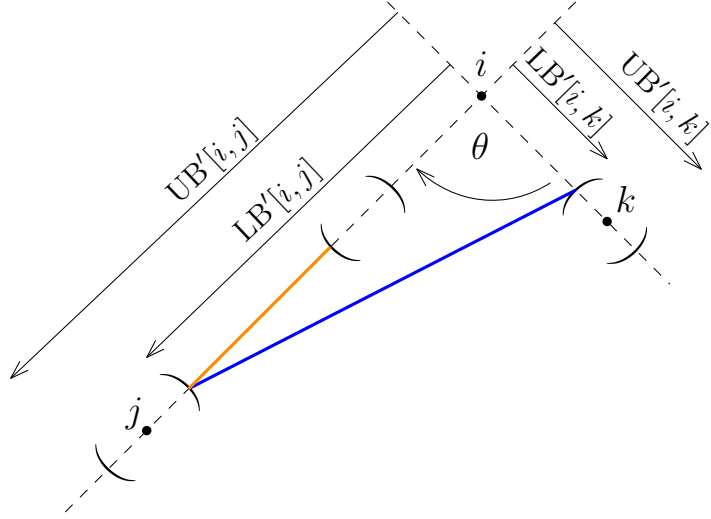}
\caption{Pictorial justification for the lower bound in \eqref{eq:intuitive-tri-lb}. True positions of points $i, j, k$ are shown along with the upper and lower bounds for $d_{i,j}, d_{i, k}$ that are known to the algorithm. If the angle $\theta$ between $\vec{\imath \jmath}$ and $\vec{\imath k}$ is known, the blue segment shows the lowest possible value for $d_{j,k}$ based on the bounds. The orange segment is the value in the RHS of \eqref{eq:intuitive-tri-lb}. Without any information about $\theta$, the three points could be collinear, in which case $d_{j,k}$ could equal the length of the orange segment.  }\label{fig:tribounds}
\end{figure}
We finish with a quick lemma noting what can and cannot be eliminated via the triangle inequality. 
\begin{lemma}\label{lem:closest}
Conditioned on the good event that all bounds are correct at all times, the triangle inequality cannot be used to to separate the two closest points to any given third point. 
\end{lemma}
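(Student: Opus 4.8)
The plan is to argue by contradiction: suppose $x_p$ and $x_q$ are the two closest points to some third point $x_m$, with $d_{m,p} \le d_{m,q} \le d_{m,\ell}$ for every $\ell \notin \{m,p,q\}$, and suppose that at some time $t$ the triangle machinery of Lemma~\ref{lem:tri-bounds} separates them, i.e.\ that the triangle upper bound on one of these distances falls below the triangle lower bound on the other (which is exactly what would be needed to eliminate $x_p$ or $x_q$ from being $x_{m^\ast}$ via \eqref{eq:active_set}). I would then show that any valid upper and lower bounds, when combined through the triangle inequality as in \eqref{eq:triUB}–\eqref{eq:triLB}, are consistent with the true distances and hence cannot produce such a separation given that $x_p,x_q$ are in fact the two nearest points to $x_m$.

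Concretely, first I would recall that on the good event all concentration bounds are correct (\eqref{eq:good-event}), and that by the induction in Lemma~\ref{lem:tri-bounds} all triangle bounds are correct as well, so $d_{j,k} \in [\TL{j}{k}(t), \TU{j}{k}(t)]$ for every pair. Separation of $x_p$ from $x_q$ in round $m$ would require, via the active-set condition \eqref{eq:active_set}, that either $\max\{L_{p,m}(t),\TL{p}{m}(t)\} \ge \min_k\min\{U_{m,k}(t),\TU{m}{k}(t)\}$ or the analogous statement with $p$ replaced by $q$. But since the bounds are correct, $\min_k\min\{U_{m,k}(t),\TU{m}{k}(t)\} \ge \min_k d_{m,k} = d_{m,p}$, and $\max\{L_{p,m}(t),\TL{p}{m}(t)\} \le d_{m,p}$; likewise $\max\{L_{q,m}(t),\TL{q}{m}(t)\} \le d_{m,q}$, while the best available upper bound on $d_{m,p}$ is at least $d_{m,p}$, so $x_p$ is never eliminated. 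For $x_q$, its lower bound is at most $d_{m,q}$, while the relevant minimum over $k$ of the upper bounds is at least $\min_{k\ne m} d_{m,k} = d_{m,p} \le d_{m,q}$; since the minimizing $k$ could be $p$ itself, the upper bound used in \eqref{eq:active_set} is at least $d_{m,p}$, and I would need to check that this cannot strictly exceed the lower bound on $d_{m,q}$ in a way that eliminates $q$ — which is true precisely because $q$ is the \emph{second} closest, so no upper bound available to the algorithm for any $d_{m,k}$ can be forced below $d_{m,q}$ except possibly the one for $d_{m,p}$, and that one equals a quantity $\le d_{m,q}$ only weakly, never strictly below $L_{q,m}$ on the good event.

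The main obstacle, and the step I would be most careful about, is the case analysis for the second-closest point $x_q$: one has to rule out that the triangle \emph{upper} bound $\TU{m}{p}(t)$ on the distance to the closest point $x_p$ could dip strictly below the \emph{lower} bound on $d_{m,q}$. This cannot happen on the good event because $\TU{m}{p}(t) \ge d_{m,p}$ always, but $d_{m,p} \le d_{m,q}$, so at best $\TU{m}{p}(t)$ equals something $\le d_{m,q}$; meanwhile $\max\{L_{q,m}(t),\TL{q}{m}(t)\} \le d_{m,q}$ as well, and a strict inequality separating them would force $d_{m,p} < d_{m,q}$ to be violated or a bound to be incorrect — contradiction. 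So the delicate point is bookkeeping which bound plays which role in \eqref{eq:active_set}, and confirming that the only upper bound capable of being small (the one for the genuinely closest distance) is never small enough to cross the lower bound of the genuinely second-closest distance. Once that is nailed down, the lemma follows immediately.
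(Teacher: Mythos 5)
Your handling of the only nontrivial case --- eliminating the second-closest point $x_q$ --- has a genuine gap. You justify it solely from the validity of the bounds on the good event ($\TU{m}{p}(t)\ge d_{m,p}$ and $\max\{L_{q,m}(t),\TL{q}{m}(t)\}\le d_{m,q}$) and assert that a strict separation ``would force $d_{m,p}<d_{m,q}$ to be violated or a bound to be incorrect.'' That implication is false: whenever $d_{m,p}<d_{m,q}$, an upper bound on $d_{m,p}$ anywhere in $[d_{m,p},d_{m,q})$ and a lower bound on $d_{m,q}$ above it are simultaneously correct yet separate the two points --- indeed this is exactly what the \emph{concentration} bounds do once enough samples are drawn, and it is how \texttt{SETri} eventually eliminates $x_q$. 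So your argument, which never distinguishes triangle bounds from concentration bounds, proves too much: it would show that no correct bound can ever eliminate the second-nearest point, contradicting the algorithm's operation. Validity of the bounds cannot be the reason the lemma holds; the lemma is a statement about the specific \emph{form} of the triangle bounds.

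The missing idea, which is the core of the paper's proof, is structural. Any triangle upper bound on the nearest distance $d_{m,p}$ is (on the good event) at least $d_{m,\ell}+d_{\ell,p}$ for some intermediate point $\ell\notin\{m,p\}$; since $d_{m,\ell}\ge d_{m,q}$ for every $\ell\neq q$, and for $\ell=q$ the sum is $d_{m,q}+d_{q,p}\ge d_{m,q}$, no triangle upper bound on $d_{m,p}$ can ever fall below the true second distance $d_{m,q}$. Symmetrically, any triangle lower bound on $d_{m,q}$ is a difference of distance estimates through an intermediate point and so (on the good event) cannot exceed the true distances involved; the paper uses the rearranged inequality $d_{m,p}-d_{p,q}\le d_{m,q}$ to argue such lower bounds never exceed $d_{m,p}$, and in any case they never exceed $d_{m,q}$, which by the first fact never exceeds any triangle upper bound on $d_{m,p}$. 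These two facts about sums and differences of true distances --- not mere correctness of the intervals --- are what rule out separation by the triangle machinery, and they are absent from your proposal.
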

\begin{proof}
Consider finding $x_{i^\ast}$. Let $d_{i, i^\ast} \leq d_{i,j} \leq d_{i,k} \forall k\neq i^\ast, j$. By the triangle inequality, $d_{i, i^\ast} \leq d_{i,j} + d_{j, i^\ast}$ Clearly, the RHS is no smaller than $d_{i,j}$. Since we are conditioning on all bounds being correct at all times, no upper bound on $d_{i,i^\ast}$ from the triangle inequality can ever be smaller that $d_{i,j}$. Rearranging the inequality, we see that $d_{i, i^\ast} - d_{j, i^\ast} \leq d_{i,j}$. The LHS is no larger than $d_{i, i^\ast}$, and $d_{i, i^\ast} $ is the only distance wrt $x_i$ that is smaller than $d_{i, j}$ by assumption. Therefore, no lower bound on $d_{i,j}$ due to the triangle inequality is greater than $d_{i, i^\ast} < d_{i,j}$. 
\end{proof}

\subsubsection{Helper Lemmas}
\begin{lemma}\label{lem:CI_bounds}
Let $t \in \mathbb{N}$ index the rounds of the procedure \texttt{SETri} in finding $x_{i^\ast}$. Suppose all confidence intervals are valid, i.e., \eqref{eq:good-event} is true. Then $\forall j \neq i$ and all $t$, 
\begin{equation}
L_{i,j}(t) \geq d_{i,j} - 2C_{\delta/n}(T_{i,j}(t)) \quad \text{and} \quad 
U_{i,j}(t) \leq d_{i,j} + 2C_{\delta/n}(T_{i,j}(t)).
\end{equation}
\end{lemma}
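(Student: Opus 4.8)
The plan is to derive both inequalities directly from the good event \eqref{eq:good-event} together with the explicit form of the bounds that \texttt{SETri} stores. First I would observe that whenever \texttt{SETri} writes an entry $L[i,j], U[i,j]$ (lines~6--7 of Algorithm~\ref{alg:scoob}), it does so as $L_{i,j}(t) = \hd_{i,j}(t) - C_{\delta/n}(T_{i,j}(t))$ and $U_{i,j}(t) = \hd_{i,j}(t) + C_{\delta/n}(T_{i,j}(t))$, and, by the symmetric-update convention of Section~\ref{sec:alg}, the same holds for the $(j,i)$ entry; entries not touched in the current round retain this form from an earlier round or an earlier call. Hence, for every $t$ and every $j \neq i$ with $T_{i,j}(t) \ge 1$, the stored interval is exactly $[\hd_{i,j}(t) - C_{\delta/n}(T_{i,j}(t)),\, \hd_{i,j}(t) + C_{\delta/n}(T_{i,j}(t))]$, i.e.\ it is centered at the current empirical mean with half-width $C_{\delta/n}(T_{i,j}(t))$. (The case $T_{i,j}(t)=0$ is trivial: the entries are $\mp\infty$ by initialization, so both claimed inequalities hold with the convention $C_{\delta/n}(0)=+\infty$.)

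Next, conditioning on the event that all confidence intervals are valid, \eqref{eq:good-event} gives $d_{i,j} \in [L_{i,j}(t), U_{i,j}(t)]$ for all $t$ and all $j \neq i$. Combined with the displayed form of the bounds, this is equivalent to $|\hd_{i,j}(t) - d_{i,j}| \le C_{\delta/n}(T_{i,j}(t))$. Using this one-sided estimate twice then yields
\[
L_{i,j}(t) = \hd_{i,j}(t) - C_{\delta/n}(T_{i,j}(t)) \ge \big(d_{i,j} - C_{\delta/n}(T_{i,j}(t))\big) - C_{\delta/n}(T_{i,j}(t)) = d_{i,j} - 2C_{\delta/n}(T_{i,j}(t)),
\]
and symmetrically $U_{i,j}(t) = \hd_{i,j}(t) + C_{\delta/n}(T_{i,j}(t)) \le d_{i,j} + 2C_{\delta/n}(T_{i,j}(t))$, which is exactly the claim.

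There is no genuine obstacle here; the only thing requiring care is the bookkeeping in the first step, namely checking that the stored bounds are always centered at the current $\hd_{i,j}(t)$ with half-width precisely $C_{\delta/n}(T_{i,j}(t))$ regardless of whether $\query{i}{j}$ was sampled in the current round (it may have been fixed in a previous round, or imported by symmetry). This is immediate from inspecting the update lines of Algorithm~\ref{alg:scoob}. The real purpose of the lemma is simply to re-express the algorithm's intervals relative to the \emph{true} distance $d_{i,j}$ rather than the empirical mean $\hd_{i,j}(t)$, which is the form needed for the elimination and sample-complexity arguments that follow (e.g.\ Lemmas~\ref{lem:easy-elim} and \ref{lem:lower-bound-samples}).
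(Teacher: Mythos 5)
Your proof is correct and follows essentially the same route as the paper: the good event gives $|\hd_{i,j}(t)-d_{i,j}|\leq C_{\delta/n}(T_{i,j}(t))$, and substituting this into $U_{i,j}(t)=\hd_{i,j}(t)+C_{\delta/n}(T_{i,j}(t))$ and $L_{i,j}(t)=\hd_{i,j}(t)-C_{\delta/n}(T_{i,j}(t))$ yields both inequalities at once. The extra bookkeeping you include (symmetric updates, the $T_{i,j}(t)=0$ case) is harmless and the paper simply takes it for granted.
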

\begin{proof}
If the good event \eqref{eq:good-event} is true then for any pair $(i,j)$ and time $t$ we have 
\begin{equation*}
\hd_{i,j}(t) < d_{i,j} + C_{\delta/n}(T_{i,j}(t)) \implies U_{i,j}(t) := \hd_{i,j}(t) + C_{\delta/n}(T_{i,j}(t)) \leq d_{i,j} + 2C_{\delta/n}(T_{i,j}(t)).
\end{equation*}
A similar calculation can be done for $L_{i,j}(t)$ as well.
\end{proof}

\begin{lemma}\label{lem:tri-elim}
 Let $j > i$, and let $t_j$ be the time when $x_j$ is last sampled in the $i^{th}$ round and equivalently for $t_k$. Assume without loss of generality that $d_{i,j} < d_{i,k}$. If $d_{i,j}$ and $d_{i,k}$ are such that
\begin{equation}\label{eq:tri-elim}
4 C_{\delta/n}(T_{i,j}(t_j)) + 2 C_{\delta/n}(T_{i,k}(t_k)) \leq d_{i,k} - 2d_{i,j}
\end{equation}
then \texttt{SETri} can eliminate $d_{j,k}$ without sampling it, i.e., $x_k \notin \A_j(0)$.
\end{lemma}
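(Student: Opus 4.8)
The plan is to show that under condition~\eqref{eq:tri-elim}, the triangle lower bound $\TLi{i}{j}{k}$ available at the start of the $j^{th}$ round strictly exceeds the minimum possible upper bound that $x_k$ would need to beat in order to stay in $\A_j(0)$; by the active-set definition~\eqref{eq:active_set} (or, more simply, via Lemma~\ref{lem:easy-elim} and its relaxation in Lemma~\ref{lem:lower-bound-samples}), this forces $x_k \notin \A_j(0)$. The key observation is that since $j>i$, by symmetry the samples of $\query{i}{j}$ and $\query{i}{k}$ gathered in the $i^{th}$ round are available, so the confidence bounds $L_{i,j}(t), U_{i,j}(t), L_{i,k}(t), U_{i,k}(t)$ at the current time $t$ are at least as tight as they were at times $t_j$ and $t_k$ respectively (the counts $T_{i,j}, T_{i,k}$ only increase, and $C_{\delta/n}(\cdot)$ is decreasing). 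Hence it suffices to prove the elimination using the bounds as of $t_j$ and $t_k$.

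First I would invoke Lemma~\ref{lem:CI_bounds}: on the good event, $L_{i,k}(t_k) \geq d_{i,k} - 2C_{\delta/n}(T_{i,k}(t_k))$ and $U_{i,j}(t_j) \leq d_{i,j} + 2C_{\delta/n}(T_{i,j}(t_j))$. From the recursive definition~\eqref{eq:triLB} of $\TLi{i}{j}{k}$ — taking the term using the concentration bounds $L_{i,k}$ and $U_{i,j}$ (i.e.\ the crude lower bound $|d_{i,j}-d_{i,k}|$ via $x_i$ alone, as in the intuitive bound~\eqref{eq:intuitive-tri-lb}) — we get
\begin{equation*}
\TLi{i}{j}{k}(t) \;\geq\; \big(L_{i,k}(t_k) - U_{i,j}(t_j)\big)_+ \;\geq\; d_{i,k} - 2C_{\delta/n}(T_{i,k}(t_k)) - d_{i,j} - 2C_{\delta/n}(T_{i,j}(t_j)).
\end{equation*}
Next I would apply the same lemma to bound the quantity $x_k$ must beat. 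Following Lemma~\ref{lem:easy-elim}, $x_k \notin \A_j(0)$ whenever $2U_{i,j} < L_{i,k} - U_{i,j} \leq \TLi{i}{j}{k}$; equivalently, using the triangle \emph{upper} bound $\TUi{i}{j}{j'} \leq U_{i,j}+U_{i,j'}$ one shows the minimum over $k'$ of the upper bounds on $d_{j,k'}$ is at most $2U_{i,j}(t_j) \leq 2d_{i,j} + 4C_{\delta/n}(T_{i,j}(t_j))$ (take $k'=i$ and use $d_{j,i}\le 2d_{i,j}$ trivially, or the pair bound). Combining the two displays, $x_k$ is eliminated provided
\begin{equation*}
d_{i,k} - 2C_{\delta/n}(T_{i,k}(t_k)) - d_{i,j} - 2C_{\delta/n}(T_{i,j}(t_j)) \;>\; 2d_{i,j} + 4C_{\delta/n}(T_{i,j}(t_j)) - d_{i,j},
\end{equation*}
which rearranges exactly to $d_{i,k} - 2d_{i,j} > 4C_{\delta/n}(T_{i,j}(t_j)) + 2C_{\delta/n}(T_{i,k}(t_k))$, i.e.\ condition~\eqref{eq:tri-elim} (with the inequality non-strict absorbed into the $\O$/threshold as usual).

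The main obstacle I anticipate is bookkeeping the "which upper bound does $x_k$ need to beat" side cleanly: the active-set condition~\eqref{eq:active_set} compares $\TLi{i}{j}{k}$ against $\min_{k'}\min\{U_{j,k'},\TU{j}{k'}\}$, and one must exhibit a concrete $k'$ (here $k'=i$, whose distance to $x_j$ has both a fresh-sample upper bound from the $i^{th}$ round and a triangle bound) giving a small enough upper bound — and one must make sure the constants $4$ and $2$ in~\eqref{eq:tri-elim} line up, which is precisely why the asymmetric split of the six $C_{\delta/n}$ terms (four on the $(i,j)$ side, two on the $(i,k)$ side) appears. A secondary point to handle carefully is that $t_j$ and $t_k$ are the times $x_j$ resp.\ $x_k$ were \emph{last} sampled in the $i^{th}$ round, so $T_{i,j}(t_j)$ and $T_{i,k}(t_k)$ are the final counts from that round; monotonicity of $C_{\delta/n}$ then guarantees the bounds only improve by the time round $j$ begins, so no loss is incurred by stating the condition in terms of $t_j,t_k$.
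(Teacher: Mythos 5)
Your overall route is the same as the paper's: on the good event, use Lemma~\ref{lem:CI_bounds} to get $U_{i,j}(t_j) \leq d_{i,j} + 2C_{\delta/n}(T_{i,j}(t_j))$ and $L_{i,k}(t_k) \geq d_{i,k} - 2C_{\delta/n}(T_{i,k}(t_k))$, and then eliminate $x_k$ through the pivot $x_i$ via Lemma~\ref{lem:easy-elim}. However, the constants in your final step do not line up, so as written you only prove the lemma under a strictly stronger hypothesis. The benchmark that the triangle lower bound must exceed is obtained by taking $k'=i$ in the active-set definition, and the relevant quantity there is the \emph{direct} confidence bound $U_{j,i}=U_{i,j} \leq d_{i,j}+2C_{\delta/n}(T_{i,j}(t_j))$, not $2U_{i,j}$. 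Equivalently, Lemma~\ref{lem:easy-elim} needs $2U_{i,j} < L_{i,k}$, i.e.\ $U_{i,j} < L_{i,k}-U_{i,j}$; you instead write $2U_{i,j} < L_{i,k}-U_{i,j}$ (that is $3U_{i,j}<L_{i,k}$), and your final display uses the hybrid benchmark $2d_{i,j}+4C_{\delta/n}(T_{i,j}(t_j)) - d_{i,j}$, which carries the same extra slack. Rearranged, your displayed sufficient condition is $d_{i,k}-2d_{i,j} > 6C_{\delta/n}(T_{i,j}(t_j)) + 2C_{\delta/n}(T_{i,k}(t_k))$, which is not implied by \eqref{eq:tri-elim}; the claim that it rearranges exactly to \eqref{eq:tri-elim} is an arithmetic slip.

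The repair is immediate and collapses your argument onto the paper's: from Lemma~\ref{lem:CI_bounds}, $2U_{i,j}(t_j) \leq 2d_{i,j}+4C_{\delta/n}(T_{i,j}(t_j))$ and $L_{i,k}(t_k) \geq d_{i,k}-2C_{\delta/n}(T_{i,k}(t_k))$, so \eqref{eq:tri-elim} gives $2U_{i,j}(t_j) \leq L_{i,k}(t_k)$ directly, and Lemma~\ref{lem:easy-elim} (whose one-line proof is exactly $U_{i,j} < L_{i,k}-U_{i,j} \leq \TLi{i}{j}{k}$) then yields $x_k\notin\A_j(0)$. Your observations about symmetry and about later samples only shrinking $C_{\delta/n}$, so that the bounds at the start of round $j$ are at least as good as at $t_j, t_k$, are correct and consistent with the paper's treatment.
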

\begin{proof}
Focusing on the number of $\query{i}{j}$ queries, we have that
\begin{equation}\label{eq:elimtri1}
2U_{i,j}(t_j) = 2(\hd_{i,j}(t_j) + C_{\delta/n}(T_{i,j}(t_j))) 
\leq 2(d_{i,j} + 2 C_{\delta/n}(T_{i,j}(t_j))),
\end{equation}
the inequality in \eqref{eq:elimtri1} is due to Lemma~\ref{lem:CI_bounds}, and using the number of $\query{i}{k}$ queries, 
\begin{equation}\label{eq:elimtri2}
L_{i, k}(t_k) \geq \hd_{i,k}(t_k) - C_{\delta/n}(T_{i,k}(t_k)) \geq d_{i,k} - 2 C_{\delta/n}(T_{i,k}(t_k)).
\end{equation}
The first inequality in \eqref{eq:elimtri2} is because if $k < j$ then there may have been more $\query{k}{i}$ queries beyond the $t_k$ number of $\query{i}{k}$ queries made while finding $x_{i^\ast}$. Rearranging the equation in the Lemma statement,
\begin{equation*}
2d_{i,j} + 4 C_{\delta/n}(T_{i,j}(t_j)) \leq d_{i,k} - 2 C_{\delta/n}(T_{i,k}(t_k)),
\end{equation*}
which implies that $2U_{i,j} \leq L_{i, k}$ from \eqref{eq:elimtri1}, \eqref{eq:elimtri2}. Hence from Lemma~\ref{lem:easy-elim} $x_k \notin \A_j(0)$.
\end{proof}

\begin{lemma}\label{lem:circulant}
There exists a dataset $\mathcal{P}$ containing $2\nu$ points such that for all $x_p \in \mathcal{P}$ and $\alpha > 0$ the set of suboptimality gaps $\Delta_{p,p'}$ is 
\begin{equation}\label{eq:delta-profile}
\left\lbrace 1 - \left( \frac{s-1}{\nu - 1} \right)^\alpha :
s \in \{1,2,\ldots, \nu-1\} \right\rbrace.
\end{equation}
\end{lemma}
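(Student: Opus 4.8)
The plan is to build $\mathcal{P}$ as a \emph{circulant} finite metric space on the cyclic group $\mathbb{Z}/2\nu\mathbb{Z}$, which makes every point's gap profile automatically identical, so that only one point has to be analyzed. I would index the $2\nu$ points by $\{0,1,\ldots,2\nu-1\}$ and let $\rho(p,q) := \min\{(p-q)\bmod 2\nu,\; 2\nu - ((p-q)\bmod 2\nu)\} \in \{0,1,\ldots,\nu\}$ denote the circular distance between indices $p$ and $q$. Then I would set $d_{p,q} := g(\rho(p,q))$, where $g:\{0,1,\ldots,\nu\}\to\mathbb{R}_{\ge 0}$ is defined by $g(0)=0$, $g(\nu)=c$, and $g(k) = c + 1 - \left(\tfrac{k-1}{\nu-1}\right)^{\alpha}$ for $k\in\{1,\ldots,\nu-1\}$, with $c\ge 1$ a fixed constant. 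The two features of $g$ that matter are that every nonzero distance lies in the band $[c,\,c+1]$ with $c\ge 1$, and that $g(\nu)=c$ is strictly smaller than each of $g(1),\ldots,g(\nu-1)$.

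The next step is to check that $d$ is a bona fide metric. Symmetry is immediate from $\rho(p,q)=\rho(q,p)$; and $d_{p,q}=0$ iff $\rho(p,q)=0$ iff $p=q$, since $g(k)\ge c\ge 1>0$ whenever $k\ge 1$. For the triangle inequality, given any three distinct points the two ``short'' sides have total length at least $2c\ge c+1$, which is at least the length of the remaining side (at most $c+1$); cases with a repeated index are trivial. (If one insists on an explicit point set, the Fr\'echet embedding places this finite metric isometrically into $\ell_\infty^{2\nu}$.)

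Finally I would read off the gap profile from the circulant structure. Fix any $x_p$. For $k\in\{1,\ldots,\nu-1\}$ the two indices $p\pm k$ are the only ones at circular distance $k$ from $p$, and the antipode $p+\nu$ is the unique index at circular distance $\nu$; hence the multiset of distances from $x_p$ to the other $2\nu-1$ points is $\{g(1),g(1),\ldots,g(\nu-1),g(\nu-1),g(\nu)\}$. Since $g(\nu)=c<g(k)$ for all $k\le\nu-1$, the nearest neighbor $x_{p^\ast}$ is the (unique) antipode and $d_{p,p^\ast}=c$. Therefore every $x_{p'}\notin\{x_p,x_{p^\ast}\}$, lying at some circular distance $k\in\{1,\ldots,\nu-1\}$, has $\Delta_{p,p'}=g(k)-c=1-\left(\tfrac{k-1}{\nu-1}\right)^{\alpha}$; as $k$ ranges over $\{1,\ldots,\nu-1\}$ these realize exactly the $\nu-1$ values in the claimed set (each attained by the two points $p\pm k$), and $\Delta_{p,p^\ast}=\min_{k}\bigl(1-(\tfrac{k-1}{\nu-1})^{\alpha}\bigr)=1-\left(\tfrac{\nu-2}{\nu-1}\right)^{\alpha}$ also belongs to that set. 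By the circulant symmetry the same holds for every $p$, which is the claim.

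I expect the only genuinely delicate point to be the metric check, and it is precisely the constant shift $c\ge 1$ that disposes of it: an arbitrary circulant ``distance'' on a cycle need not satisfy the triangle inequality, but confining all nonzero distances to a narrow band bounded away from zero forces it. The one bookkeeping detail worth stating carefully is the multiplicity count --- circular distances $1,\ldots,\nu-1$ occur twice each while $\nu$ occurs once --- since that is what reconciles the $\nu-1$ distinct gap values with the $2\nu-2$ non-optimal points.
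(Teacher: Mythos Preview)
Your construction is correct and is essentially the same as the paper's: both build a circulant metric on $2\nu$ points with all nonzero distances confined to a band $[c,c+1]$ (the paper takes $c=r=1$), so that the triangle inequality holds trivially and the antipode is the unique nearest neighbor, yielding the desired gap profile by symmetry. Your presentation via the circular index distance $\rho$ and the explicit multiplicity bookkeeping is arguably cleaner, but the underlying idea is identical.
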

\begin{proof}
Note that there are $\nu-1$ values given in \eqref{eq:delta-profile} while there are $2\nu-2$ points in the cluster, excluding $x_p$ and $x_{p^\ast}$. Each value in \eqref{eq:delta-profile} is the suboptimality gap for two distinct points in $\mathcal{P}\setminus\{x_{p}, x_{p^\ast}\}$. We can construct such a dataset $\mathcal{P}$ in the following manner. 

We index these points as $p, p_1, p_2, \ldots, p_{2\nu - 1}$. Suppose the pairwise distance values are such that
\begin{align}
d_{p,p_1} > d_{p,p_2} > \cdots > d_{p,p_{\nu-1}} > d_{p,p_{\nu}} =: d_{p,p^\ast}, &\text{ and }
d_{p, p_{\nu+1}} < d_{p, p_{\nu+2}} < \cdots < d_{p, p_{2\nu - 1}} \text{ such that} \nonumber\\
\forall s \in \{1, 2, \ldots, \nu-1\} \text{ we have that } d_{p,p_{\nu-s}} &= d_{p,p_{\nu+s}} \implies d_{p,p_i} = d_{p,p_{2\nu-i}}. \label{eq:dist-profile}
\end{align}
We can then construct a $2\nu \times 2\nu$ distance matrix $D$ in the following manner. The first row of $D$ is
\begin{equation*}
D[0,:] := \begin{bmatrix}
0 & d_{p,p_1} & d_{p, p_2} & \cdots & d_{p, p_{\nu - 1}} & d_{p, p^\ast} & d_{p, p_{\nu + 1}} & \cdots & d_{p, p_{2\nu - 2}} & d_{p, p_{2\nu-1}}
\end{bmatrix}.
\end{equation*}
The $i$th row of $D$ is obtained by carrying out $i$ circular shifts on the initial row $D[0,:]$ shown above. Thus $D$ is a circulant matrix and we can see $D[i,j]$ and $D[j,i]$ to be as follows.
\begin{equation*}
D[i, j] = \begin{cases}
d_{p, p_{j-i}} & \text{if } j > i\\
d_{p, p_{ 2\nu-(i-j)}} & \text{if } j < i,
\end{cases}
\quad \text{ and } \quad
D[j, i] = \begin{cases}
d_{p, p_{i-j}} & \text{if } i > j\\
d_{p, p_{2\nu-(j-i)}} & \text{if } i < j.
\end{cases}
\end{equation*}
Then using \eqref{eq:dist-profile} we have that $D[i,j] = D[j,i]$ for all $i \neq j$ and the diagonal entries are all $0$. Thus $D$ is symmetric. In addition, the distance values of the points to any point in the cluster take the same set of values. Suppose $d_{p,p^\ast} =: r > 0$ and $d_{p,p_1} = 2r$. 
Choose an $\alpha > 0$ and let 
\begin{equation*}
d_{p, p_{2\nu-i}} = d_{p,p_i} := r\left( 2 - \left( \frac{s-1}{\nu-1}\right)^\alpha\right), ~\forall s \in \{1,2,\ldots, \nu-1\}. 
\end{equation*}
Then $D[i,j] \leq D[i,k] + D[k,j]$ for any three distinct $i,j,k$ as the sum of any two elements is greater than $2r$, which is the largest element in $D$. Thus the distance values in $D$ satisfy the triangle inequality, and $D$ is a valid distance matrix. The suboptimality gaps for any point in the cluster is
$\Delta_{p,p_i} = d_{p,p_i} - d_{p, p^\ast} = r(1 - (\nicefrac{(i-1)}{(\nu-1)})^\alpha)$, choosing $r=1$ finishes the required construction.
\end{proof}
\subsubsection{Proof of Theorem~\ref{thm:correctness}}
\begin{proof}
\texttt{ANNTri} makes an error in finding the nearest neighbor for some point with probability 
$\P(\text{\texttt{SETri} is wrong for some } x_j, j \in \{1,2,\ldots,n\})$. We show that probability is at most $n\xi = \delta$, where the confidence level $\xi$ for each execution of \texttt{SETri} is set to be $\delta/n$. 
We use induction on $s \in \mathbb{N}$ to obtain that 
\begin{equation}\label{eq:ind-hypothesis}
\P(\forall j \in \{1,2,\ldots,s\}, k \neq j, \max\{L_{j,k}(t), \TL{j}{k}(t)\} \leq d_{j,k} \leq \min\{U_{j,k}(t), \TU{j}{k}(t)\}) \geq 1 - s\xi.
\end{equation}
Consider the base case, i.e., point $x_1$. From the initialization of \texttt{ANNTri} \ref{ANNTri:init}, $\min\{U_{1,k}(t), \TU{1}{k}(t)\} = U_{1,k}(t), \min\{L_{1,k}(t), \TL{1}{k}(t)\} = L_{1,k}(t)$ for all $k \neq 1$. 
Using \eqref{eq:good-event} we have $L_{1,k}(t) \leq d_{1,k} \leq U_{1,k}(t)$ with probability $1 - \delta/n$, and since $\xi$ is $\delta/n$ the base case is true. 
Assume the hypothesis \eqref{eq:ind-hypothesis} is true for some $s$. We show that it is true for $s+1$ as well. We can bound the error event as follows.
\begin{align}
&\P(\exists j \in \{1, \ldots, s+1\}, k \neq j : d_{j,k} \notin [\max\{L_{j,k}(t), \TL{j}{k}(t)\}, \min\{U_{j,k}(t), \TU{j}{k}(t)\}]) \label{eq:ind-step}\\
&= \P(\exists j \in \{1, \ldots, s\}, k \neq j : d_{j,k} \notin [\max\{L_{j,k}(t), \TL{j}{k}(t)\}, \min\{U_{j,k}(t), \TU{j}{k}(t)\}]) \nonumber\\
&\quad + \P\bigg(\{k \neq s+1 : d_{s+1,k} \notin [\max\{L_{s+1,k}(t), \TL{s+1}{k}(t)\}, \min\{U_{s+1,k}(t), \TU{s+1}{k}(t)\}]\} \nonumber\\
&\quad \quad \quad \quad \cap \{\forall j \in \{1,2,\ldots,s\}, k \neq j, \max\{L_{j,k}(t), \TL{j}{k}(t)\} \leq d_{j,k} \leq \min\{U_{j,k}(t), \TU{j}{k}(t)\}\}\bigg) \nonumber
\end{align}
From \eqref{eq:ind-hypothesis} the first summand in the RHS of \eqref{eq:ind-step} is at most $s\xi$. In the event corresponding to the second term, all the bounds used by \texttt{SETri} for $d_{j,k}, j \leq s, k \neq j$ are correct. Since $\TU{s+1}{\cdot}$ and $\TL{s+1}{\cdot}$ are both deterministically obtained (see \eqref{eq:triUB}, \eqref{eq:triLB}) from them, they are correct as well. Thus 
\begin{align*}
&\P(\max\{L_{s+1,k}(t), \TL{s+1}{k}(t)\} \leq d_{s+1,k} \leq \min\{U_{s+1,k}(t), \TU{s+1}{k}(t)\})  \\ & = \P(L_{s+1,k}(t) \leq d_{s+1,k} \leq U_{s+1,k}(t)) \geq 1- \xi.
\end{align*}
Hence the second summand in the RHS of \eqref{eq:ind-step} is at most $\xi$. This proves \eqref{eq:ind-hypothesis} for $s+1$ and completes the induction. 

Thus with probability $1 - n\xi = 1-\delta$, the bounds obtained by \texttt{SETri} for finding $x_{j^\ast}, j \in \{1, \ldots, n\}$ are all correct. We show that \texttt{ANNTri} correctly finds all nearest neighbors if the bounds are correct. For if not, suppose \texttt{SETri} returns the wrong nearest neighbor of $x_j$ which happens only if $x_{j^\ast}$ is not the last point in the active set. 
$x_{j^\ast} \notin \A$ because some other point $x_k \in \A$ eliminates it. Then 
$d_{j,k} < \min\{U_{j,k}, \TU{j}{k}\} < \max\{L_{j,j^\ast}, \TL{j}{j^\ast}\} < d_{j,j^\ast}$,
which contradicts the fact that $j^\ast$ is the nearest neighbor.
\end{proof}

\subsubsection{Proof of Lemma~\ref{lem:lower-bound-samples}}
\begin{proof}
Consider a point $x_i, i<j$ which satisfies the first part of \eqref{eq:lower-bound-samples}. If $x_j \in \A_i(0)$ and $x_k \in \A_i(0)$, then neither $x_j$ and $x_k$ were eliminated without sampling 
when \texttt{SEEasyi} was called for $x_i$ and hence $T_{i,j}\geq 1$ and $T_{i,k} \geq 1$. Then we have that
\begin{equation*}
4 C_{\delta/n}(T_{i,j}(t_j)) + 2 C_{\delta/n}(T_{i,k}(t_k)) \leq 6C_{\delta/n}(1) \leq d_{i,k} - 2d_{i,j}
\end{equation*}
and $x_k \notin \A_j(0)$ by Lemma~\ref{lem:tri-elim}. The second part of \eqref{eq:lower-bound-samples} ensures that $\{x_j, x_k\} \subseteq \A_i(0)$ as shown next. The points eliminated from being the nearest neighbor of $x_i$ using triangle inequality are 
$\A_i(0)^\complement = \cup_{m < i}\{\ell : 2U_{m,i} < L_{m,\ell}\}$. If the bounds obtained by \texttt{SEEasy} for all $m < i$ are correct, 
\begin{equation*}
\{\ell : 2U_{m,i} < L_{m,\ell}\} \subseteq 
\{\ell : 2d_{m,i} < d_{m,\ell}\} \implies \A_i(0)^\complement \subseteq \cup_{m < i}\{\ell : 2d_{m,i} < d_{m,\ell}\}.
\end{equation*}
Hence if the second condition of \eqref{eq:lower-bound-samples} is satisfied, then $\{j,k\} \subseteq \A_i(0)$ and we are done.
\end{proof}

\subsubsection{Proof of Theorem~\ref{thm:ANNTri-complex}}
\begin{proof}
Let $x_j$ be the point on which \texttt{SEEasy} is called. Consider the case $j < k$. If $\mathds{1}_{[A_{j,k}]} = 0$ then $x_k \notin \A_j(0)$ and no $\query{j}{k}$ queries are made. Otherwise, $x_k$ can be in the active set and from \eqref{eq:SE-complexity} at most $H_{j,k}$ samples of $d_{j,k}$ are taken. Now consider the case $k < j$. Samples of $d_{j,k}$ are only queried if $x_k \in \A_j(0)$. If $x_j \notin \A_k(0)$, i.e., $x_j$ was eliminated when \texttt{SEEasy} was called for $x_k$ then no $\query{k}{j}$ queries made at that round. Again from \eqref{eq:SE-complexity} at most $H_{j,k}$ samples of $d_{j,k}$ are taken by \texttt{SEEasy} while finding $x_{j^\ast}$. If however $\mathds{1}_{[A_{k,j}]} = 1$, then $\query{k}{j}$ queries were made while finding $x_{k^\ast}$ and let the number of those samples be $\# \query{k}{j}$. Because of the sampling procedure of \texttt{SEEasy}, at most $(H_{j,k} - \# \query{k}{j})_+$ queries are made for $d_{j,k}$. The total number of $\query{j}{k}$ and $\query{k}{j}$ queries is $\max\{H_{j,k}, \# \query{k}{j}\}$, and since $\# \query{k}{j} \leq H_{k,j}$, we get the result.
\end{proof}

\subsection{Details for Section~\ref{sec:active_gains}}
In this section, we consider a case where \texttt{ANNTri} achieves complexity that scales like  $O(n^{1.5})$ as well as $O(n\log(n))$, the known optimal rate for the all nearest neighbors problem for noiseless data. To do this, we first prove a lemma about the complexity of learning with clustered data. In particular, we show that if the data comes from two well separated clusters, then the complexity of learning the nearest neighbor graph can be bounded as the complexity of learning the nearest neighbors of two points looking at the full dataset and the complexity of learning the remaining nearest neighbors graphs on each of the clusters. 

\begin{lemma}\label{lem:two-cluster}
Consider $\X = \C_1 \cup \C_2$ where $\C_1$ and $\C_2$ both satisfy \ref{eq:clustered-dataset} for all $i,j$. Then \texttt{ANNEasy} learns the nearest neighbor graph of $\X$ with probability at least $1 - \delta$ in at most 
\begin{equation}\label{eq:two-clust}
\O\left(|\C_1| + |\C_2| + \H_{\C_1} + \H_{\C_2}\right)
\end{equation}
samples independent of the order in which it finds nearest neighbors where $\H_{\C_i}$ denotes the complexity of learning the nearest neighbor graph of cluster $\C_i$ as bounded by \ref{thm:notri-complex}. 
\end{lemma}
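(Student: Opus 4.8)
The plan is to show that, under the separation condition \eqref{eq:clustered-dataset}, \texttt{ANNEasy} can only ever query pairs $(j,k)$ that lie within a common cluster (after an initial phase), plus a constant number of ``cross-cluster'' queries needed to certify that the two closest points to each vertex are within its own cluster. First I would invoke Lemma~\ref{lem:lower-bound-samples}: for any $j \in \C_m$ and any $k \notin \C_m$, I want to argue that $x_k$ satisfies the triangle-inequality elimination condition \eqref{eq:lower-bound-samples} for some $i < j$, so that $x_k \notin \A_j(0)$ and $\query{j}{k}$ is never made. The first condition of \eqref{eq:lower-bound-samples}, namely $6C_{\delta/n}(1) \le d_{i,k} - 2 d_{i,j}$ for some $i \in \C_m$, follows directly from the contrapositive of the cluster condition \eqref{eq:clustered-dataset}: if $d_{i,k} < 6C_{\delta/n}(1) + 2 d_{i,j}$ then $x_k \in \C_m$, a contradiction, so in fact \emph{every} $i \in \C_m$ with $i < j$ works. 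The second condition of \eqref{eq:lower-bound-samples} requires that $\{j,k\}$ were both actually sampled in round $i$, i.e. $\{j,k\} \cap \bigl(\cup_{m' < i}\{\ell : 2 d_{m',i} < d_{m',\ell}\}\bigr) = \emptyset$; here one needs $i$ small enough (the first index in $\C_m$, say) so that no earlier point could have eliminated $j$ or $k$ from $\A_i(0)$ — this is exactly why we need the order-independence claim, and it is where one appeals to the fact that within any round, by Lemma~\ref{lem:closest}, the two nearest points are never eliminated, plus a careful bookkeeping that the very first-indexed point in each cluster sees everything.

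The main structural step is then a case split on the position of $j$ and $k$, following the template of the proof of Theorem~\ref{thm:ANNTri-complex} applied to \eqref{eq:general-complex}. After establishing $\mathds{1}_{[A_{j,k}]} = 0$ for all cross-cluster pairs $(j,k)$ with $j$ not among the first couple of indices of its cluster, the sum in \eqref{eq:general-complex} collapses: the cross-cluster contributions reduce to $\O(|\C_1| + |\C_2|)$ — a bounded number of queries per vertex to confirm that its nearest and second-nearest neighbors are in-cluster (this is the ``$|\C_1|+|\C_2|$'' term, coming from the at-most-one or two exploratory rounds per cluster before enough triangle bounds accumulate), while the in-cluster contributions are precisely $\sum_{j,k \in \C_1} \mathds{1}[\cdots]H_{j,k}$ and likewise for $\C_2$, each of which is upper-bounded by $\H_{\C_i}$ as defined via Theorem~\ref{thm:notri-complex} (the no-triangle-inequality complexity restricted to the cluster). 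Summing gives \eqref{eq:two-clust}.

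For the order-independence, I would argue as follows: regardless of the global ordering $x_1,\dots,x_n$, within each cluster $\C_m$ there is a first-indexed element $i_m$; by the time \texttt{ANNEasy} reaches $i_m$, the cluster condition guarantees $i_m$'s active set $\A_{i_m}(0)$ still contains all of $\C_m$ (nothing from outside $\C_m$ can have been sampled against $i_m$ in a way that eliminates in-cluster points, because cross-cluster distances are too large and Lemma~\ref{lem:closest} protects the close pairs), so round $i_m$ does sample every other point of $\C_m$; from that round onward every later $j \in \C_m$ inherits valid triangle bounds via $i_m$ that kill all out-of-cluster candidates. The main obstacle I anticipate is making the ``only a constant number of cross-cluster queries per vertex'' claim fully rigorous — i.e. controlling exactly how many rounds elapse, and how many cross-cluster samples are spent, before the triangle machinery ``switches on'' for a given cluster; bounding this by $\O(|\C_1| + |\C_2|)$ total rather than something larger requires carefully tracking that only the first few indices in each cluster ever pay a cross-cluster cost, and that this cost is itself bounded (by the successive-elimination gap-based bound \eqref{eq:SE-complexity}, the number of samples any single round spends before eliminating a far-away point is finite, but one must check it does not scale with $n$ in the worst case, which is where the ``$6C_{\delta/n}(1)$'' slack in \eqref{eq:clustered-dataset} is used to force elimination after $\O(1)$ samples).
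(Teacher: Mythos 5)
Your proposal follows essentially the same route as the paper's proof: the first-indexed point of each cluster does a full sweep in which the $6C_{\delta/n}(1)$ slack forces every out-of-cluster point to be eliminated after a single sample (this is exactly where the $|\C_1|+|\C_2|$ term comes from, paid only by $x_1$ and by the first point $x_{j_2}$ of the second cluster), every later point then has all out-of-cluster candidates removed from $\A_j(0)$ via the $2U_{i,j}<L_{i,k}$ elimination through its cluster's first point, and order-independence holds because the index $j_2$ is arbitrary. The one quibble is your appeal to Lemma~\ref{lem:closest} to argue that the first cluster point's round actually samples everything it needs: that lemma only protects the nearest pair and does not by itself rule out earlier eliminations, whereas the paper handles this step directly with Lemma~\ref{lem:CI_bounds} and the contrapositive of \eqref{eq:clustered-dataset} (under the separation condition no cross-cluster distance exceeds twice another, so no $2U<L$ test can fire against an in-cluster mate) --- which is the substance of your parenthetical ``cross-cluster distances are too large,'' so the gap is one of cited justification rather than of approach.
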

The above lemma implies that for the first point explored in each cluster, it is necessary to look at all other points in the dataset, but for all other points, it is only necessary to search within that point's respective cluster. 
\begin{proof}
Choose a random order of points and fix it. Without loss of generality, we assume that $x_1 \in \C_1$.Let $j_2$ be the first point visited in $C_2$. Throughout, we will ignore reused samples since they only contribute at most a factor of $2$ to the sample complexity as can be seen by Theorems \ref{thm:ANNTri-complex} and \ref{thm:notri-complex} and we seek an upper bound. Via standard analysis for successive elimination, $x_{1^\ast}$ can be be found in $\O\left(\sum_{j=2}^nH_{1,j}\right) = |\C_2| + \O\left(\sum_{j\in \C_1\backslash\{x_1\}}H_{1,j}\right)$ samples with probability at least $1-\delta/n$. For all $i = 2, \cdots, j_2 - 1$, 
\begin{align*}
\A_i(0)^c  \supset  \{ \A_1(0) \cap \{k: d_{i,k} \geq 6d_{i,j} - 3d_{1,1^\ast}\}\}  \supset \{\X \backslash \{x_1\} \cap \C_2\} = \C_2
\end{align*}
which implies that $x_{i^\ast} \in \C_1$. For $x_{j_2}$ we may trivially say that $\A_{j_2}(0)^c \supset \{j_2\}$ so $x_{j_2^\ast}$ can be learned in $\O\left(\sum_{l \neq j_2}H_{i,j}\right) = |\C_1| + \O\left(\sum_{j\in \C_2\backslash\{x_{j_2}\}}H_{j_2,j}\right)$ samples with probability at least $1-\delta/n$. We conclude by showing that for all remaining $x_i$, if $x_i \in \C_1$, then $\A_i(0) \subset \C_1$ and if $x_i \in \C_2$, then $\A_i(0) \subset \C_2$. Consider the case that $x_1 \in \C_1$. Suppose that $\exists x_j \in \A_i(0) \cap \C_2$. Then $2U_{1,i} > L_{1,j}$. 
\begin{align*}
U_{i,1} = \hd_{1,i} + C_{\delta/n}(T_{1,i}) 
\leq d_{1,i} + 2C_{\delta/n}(T_{1,i}) 
= d_{1,i} + 2C_{\delta/n}(1)
\end{align*}
where the first inequality holds by \ref{lem:CI_bounds}. 
Similarly,
\begin{align*}
L_{1,j} = \hd_{1,j} - C_{\delta/n}(T_{1,j}) 
	 \geq d_{1,j} - 2C_{\delta/n}(T_{1,j}) 
	\geq d_{1,j} - 2C_{\delta/n}(1) 
\end{align*}
Then $2( d_{1,i} + 2C_{\delta/n}(1)) \geq 2U_{1,i}  > L_{1,j} \geq d_{1,j} - 2C_{\delta/n}(1)  \implies d_{1,j} < 2d_{1,i} + 6C_{\delta/n}(1) \implies j \in \C_1$ which is a contradiction. A similar proof holds for $x_i \in \C_2$. It remains to argue that $j_2$ can be any number between $2$ (by assumption that $x_1 \in \C_1$) and $|\C_1| + 1$ without affecting the bound on the complexity. By the assumption that $\C_1$ and $\C_2$ satisfy \ref{eq:clustered-dataset}, out of cluster points can be eliminated in a single sample. Therefore, for any $j_2$, $\sum_{l \in \C_1}H_{j_2,l} = |\C_1|$. Then we have that the total complexity is $\O\left(|\C_1| + |\C_2| + \H_{\C_1} + \H_{\C_2}\right) \ \forall j_2$. Since we have considered general orders of finding each nearest neighbor, we are done. 
\end{proof}

\subsubsection{Proof of Theorem~\ref{thm:sqrt_n}}
\begin{proof}
By assumption, the dataset $\X = \cup_{i=1}^c\C_i$ with each cluster satisfies Equation~\ref{eq:clustered-dataset}. Therefore, for all $m$, $\X = \C_m \cup (\cup_{j \neq m} \C_i)$. By applying Lemma~\ref{lem:two-cluster}, iteratively, we bound the complexity in terms of the the complexity of learning the nearest neighbor graph of $\C_m$, the complexity of learning the nearest neighbor graph of $\cup_{j \neq m} \C_i$, and an additive penalty of $n$ which accounts for the samples taken between the two. Since $\X$ is a union of $c$ clusters, this process may repeat $c$ times. Therefore the total complexity can be bounded as 
\begin{equation*}
\O\left(cn + \sum_{i=1}^c \sum_{j, k \in \C_i } H_{j,k}\right)
\end{equation*}
Taking $c = \sqrt{n}$, we see that the above sum is $\O\left(n^{1.5} \overline{\Delta^{-2}}\right)$ where $\overline{\Delta^{-2}} = \frac{1}{c*n} \sum_{i=1}^c \sum_{j, k \in \C_i } H_{j,k}$ is the average number of times intra-cluster distances are sampled. By contrast, the complexity for random sampling is $\O(n^2\Delta_{\text{min}}^{-2})$ where $\Delta_{\text{min}}^{-2} := \min_{j,k}H_{j,k}$. Comparing the two, we see that the latter is larger by at least a factor of $\O(\sqrt{n})$. 
\end{proof}

\subsubsection{Proof of Lemma~\ref{lem:nlogn}}
Next we use Lemma~\ref{lem:two-cluster} to show that for datasets such that the clusters nest, we can achieve complexity scaling in $\O(n\log(n)\overline{\Delta^{-2}})$. In particular, we will recursively apply Lemma~\ref{lem:two-cluster} to show that clusters can be broken into subclusters and initial active sets shrink in diadic splits. 
\begin{proof}
Before we prove the theorem, we begin by introducing some notation to make this proof concise. Recall that we have assumed that $\X$ can be written as a hierarchy of clusters and sub clusters that form a balanced tree. We will denote the root of the tree with the full dataset as the $0^{th}$ level and each split in that level with be indexed by $i = 1, \cdots, 2^\ell$ where $\ell = 0, \cdots, \log(n/\nu) - 1$ denotes the level. For notational ease, we take $\C_{0,1} \equiv \X$. $\C_{\ell, i}$ denotes the $i^{th}$ cluster at the $\ell^{th}$ level of the tree which may be split into subclusters if $\ell < \log(n/\nu) - 1$. The idea will be to traverse the tree and split clusters into subclusters while keeping track of the number of between cluster samples that were be necessary due to the bound in Lemma~\ref{lem:two-cluster}. We let $\H_{\C_{\ell, i}} $ denote complexity of learning the nearest neighbor graph of $\C_{\ell, i}$. 

Randomize the order and fix it. We will proceed by recursively applying Lemma~\ref{lem:two-cluster} to bound the complexity of learning the full nearest neighbor graph of a cluster in terms of learning it for each subcluster plus an additive penalty. By Lemma~\ref{lem:two-cluster} the complexity of finding the nearest neighbor graph of $\X$ can be upper bounded as 
$$\O\left(|\C_{1,1}| + |\C_{1,2}| + \H_{\C_{1,1}} + \H_{\C_{1,2}}\right) = \O\left(n + \H_{\C_{1,1}} + \H_{\C_{1,2}}\right).$$
We may again apply Lemma~\ref{lem:two-cluster} to $\C_{1,1}$ and $\C_{1,2}$. to bound their complexities as $ \O\left(\frac{n}{2} + \H_{\C_{2,1}} + \H_{\C_{2,2}}\right)$ and $ \O\left(\frac{n}{2} + \H_{\C_{2,3}} + \H_{\C_{2,4}}\right)$ respectively where $\C_{1,1} = \C_{2,1} \cup \C_{2,2}$ and $\C_{1,2} = \C_{2,3} \cup \C_{2,4}$. Therefore, similar to the above level, the total additive penalty for samples between clusters is $n$ for the level. We may continue this process of splitting and paying the penalty of $n/2^\ell \times 2^\ell$ between cluster samples down to the bottom level $\ell = \log(n/\nu)$ with clusters of size $\nu$.  

Therefore, we may write the complexity as 
\begin{align}
\O\Bigg(n \log\left(\frac{n}{\nu} \right) 
	+ \sum_{i=1}^{n/\nu}\sum_{\substack{j, k \in \C_{\log(n/\nu), i}}} H_{j,k} \Bigg).
\end{align}
Ignoring logarithmic factors, each complexity term $H_{j,k}$ is of the order $\O(\Delta_{j,k}^{-2})$. Therefore the entire summation is of the order 
$$\O\left(n \log\left(\frac{n}{\nu}\right) + n\nu\overline{\Delta^{-2}}\right)$$
where $\overline{\Delta^{-2}} := \frac{1}{n\nu}\sum_{i=1}^{n/\nu}\sum_{j, k \in \C_i} \log (n^2/(\delta \Delta_{j,k}))\Delta_{j,k}^{-2}$ is the average complexity. Recalling that $\nu = \O(\log(n))$, we are done. 
\end{proof}

\subsection{Sample complexity without using triangle inequality}
\begin{theorem}\label{thm:notri-complex}
With probability $1-\delta$, the number of oracle queries made by \texttt{ANNTri} and \texttt{ANNEasy} if all triangle bounds are ignored is at most
\begin{align}\label{eq:AllNN-complexity}
\O\left(\sum_{i < j} \max\left\{ \frac{\log (n^2/(\delta \Delta_{i, j}))}{\Delta_{i, j}^2} ,  \frac{\log (n^2/(\delta \Delta_{j, i}))}{\Delta_{j, i}^2} \right\}\right).
\end{align}
\end{theorem}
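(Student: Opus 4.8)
The plan is to bound the total number of oracle queries by summing, over each unordered pair $\{i,j\}$, the maximum number of times \texttt{SEEasy} needs to sample $d_{i,j}$ across both the $i^{th}$ and $j^{th}$ rounds. First I would observe that if all triangle bounds are ignored, then \texttt{ANNEasy} reduces to running \texttt{SEEasy} on each point $x_j$ for $j=1,\dots,n$, where the only inter-round coupling is via symmetry: samples of $\query{i}{j}$ collected in round $i$ are reused in round $j$. The good event \eqref{eq:good-event}, combined with the union bound argument of Theorem~\ref{thm:correctness}, gives that all confidence intervals are valid with probability $1-\delta$; everything below is conditioned on this event.

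Next I would invoke the standard successive elimination sample complexity bound from \cite{even2006action} quoted in \eqref{eq:SE-complexity}: in the $j^{th}$ round, finding $x_{j^\ast}$ requires at most $\O\big(\sum_{k\neq j} H_{j,k}\big)$ queries, where $H_{j,k} = \log(n^2/(\delta\Delta_{j,k}))/\Delta_{j,k}^2$. The only subtlety relative to vanilla successive elimination is the modified sampling rule in \texttt{SEEasy} (line~3 of Algorithm~\ref{alg:seeasy}): only the active points with the fewest samples are queried at each step. This rule ensures the sample counts $T[i,j]$ stay balanced within the active set, so no arm $x_k$ is ever sampled more than $\O(H_{j,k})$ times before being eliminated — the per-arm guarantee of \eqref{eq:SE-complexity} still applies arm-by-arm. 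Then, for a fixed pair $\{i,j\}$ with $i<j$, the total number of $\query{i}{j}$ plus $\query{j}{i}$ calls is at most the larger of the two per-round requirements: round $i$ may need up to $\O(H_{i,j})$ samples of $d_{i,j}$, and round $j$ may need up to $\O(H_{j,i})$; because of symmetry and reuse, round $j$ only collects additional samples beyond what round $i$ already gathered, so the combined count is $\O(\max\{H_{i,j}, H_{j,i}\})$. Summing over all $\binom{n}{2}$ pairs yields exactly \eqref{eq:AllNN-complexity}.

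The main obstacle I expect is justifying rigorously that the ``fewest samples'' sampling rule in \texttt{SEEasy} does not cause any arm to be over-sampled relative to the bound in \eqref{eq:SE-complexity}, and that reuse across rounds genuinely caps the pair total at the maximum rather than the sum. For the first point, the key observation is that once $\Delta_{j,k}/2 > C_{\delta/n}(T[k,j])$ the confidence interval for $d_{j,k}$ separates from the minimum upper bound, so $x_k$ leaves $\A_j$; solving for $T[k,j]$ gives $\O(H_{j,k})$, and the balanced sampling rule only ever samples $x_k$ when it has the fewest samples among active arms, so it cannot be sampled past this threshold (up to one extra round). For the second point, since entries of $T$ are shared symmetrically and \texttt{SEEasy} in round $j$ starts from whatever $T[i,j]$ was left after round $i$, the incremental cost in round $j$ is $(H_{j,i} - H_{i,j})_+$, so the running total is $\max\{H_{i,j},H_{j,i}\}$; this is precisely the accounting already carried out in the proof of Theorem~\ref{thm:ANNTri-complex}, specialized to the case where all indicator variables $\mathds{1}_{[A_{j,k}]}$ equal $1$ because no triangle elimination occurs. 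I would conclude by noting that ignoring triangle bounds only enlarges each active set, so \eqref{eq:AllNN-complexity} indeed upper bounds the complexity \eqref{eq:general-complex} of \texttt{ANNEasy} and, a fortiori, that of \texttt{ANNTri}.
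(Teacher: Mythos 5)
Your proposal is correct and follows essentially the same route as the paper's proof: condition on the good event, use the successive-elimination threshold to bound the per-arm sample count in each round by $H_{j,k}$, and then use the symmetric reuse of samples together with the fewest-samples-first rule to argue that the total for each pair is $\max\{H_{i,j},H_{j,i}\}$ rather than the sum. The paper simply spells out the case analysis (e.g.\ $j<i$ with leftover samples $\#\query{j}{i}$ exceeding or falling short of the new requirement) more explicitly, which is exactly the accounting you sketch.
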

In the experiments, the process of using \texttt{ANNTri} and ignoring triangle inequality bounds is referred to as \texttt{ANN}. 
\begin{proof}
In the case that triange bounds are ignored, \texttt{ANNTri} and \texttt{ANNEasy} are the same. 
Consider the $i^{th}$ round where we seek to identify $x_{i^\ast}$ with probability  $1-\delta/n$. \texttt{ANNTri} has found $x_{\ell^\ast}$ for all $\ell < i$, in particular, it has evaluated $\hd_{\ell, i}, U_{\ell, i}, L_{\ell, i}$. 
For every $x_j \neq x_{i^\ast}, x_j \in \A_i(0)$, we can bound the number of $\query{i}{j}$ queries in the following manner. Suppose $j > i$ and $i^\ast > i$, so that at the beginning of the $i^{th}$ round we have that $T_{i,j} = T_{i,i^\ast} = 0$. 
From \eqref{eq:good-event}, with probability $1-\delta/n$, $x_{i^\ast}$ is the last point in the active set. The point $x_j$ is eliminated from the active set at time $t_j$ if the following is true.
\begin{align}
U_{i,i^\ast}(t_j) \overset{\text{(a)}}{\leq} d_{i,i^\ast} + 2C_{\delta/n}(T_{i^\ast}(t_j)) 
&< d_{i,j} - 2C_{\delta/n}(T_{j}(t_j)) \overset{\text{(b)}}{\leq} L_{i,j}(t_j), \nonumber \\
\implies 4C_{\delta/n}(t_j) &< d_{i,j} - d_{i,i^\ast} = \Delta_{i,j}. \label{eq:worstcase_C-width}
\end{align}
Inequalities (a), (b) are due to Lemma~\ref{lem:CI_bounds}, and 
the fact that if $j$ is eliminated at time $t_j$, then $T_{i,j}(t_j)=t_j$. From the property of the $C_{\delta/n}(\cdot)$ function, \eqref{eq:worstcase_C-width} is ensured when the number of samples of $d_{i,j}$ is
\begin{equation*}
t_j \leq \left\lceil \kappa \frac{\log(n^2/(\delta \Delta_{i,j}/4))}{(\Delta_{i,j}/4)^2} \right\rceil.
\end{equation*}
We now consider the cases when at least one of $i^\ast, j$ are less than $i$. 

$i^\ast > i, j < i$: 
In this case, at the beginning of the $i^{th}$ round $T_{i,j}$ 
is equal to the number of $\query{j}{i}$ queries made (denoted as $\# \query{j}{i}$) while finding $x_{j^\ast}$:
\begin{equation*}
\# \query{j}{i} \leq \left\lceil \kappa \frac{\log(n^2/(\delta \Delta_{j,i}/4))}{(\Delta_{j,i}/4)^2} \right\rceil.
\end{equation*}
If $\# \query{j}{i}> t_j$, then no further $\query{i}{j}$ queries are made in the $i^{th}$ round, as argued next. Because the sampling procedure of \texttt{SETri} queries all points who have the minimum number of samples at current time, if a query $\query{i}{j}$ is made at time $t+1$, that implies $T_{i,i^\ast}(t) = \# \query{j}{i}$. But then $j$ is not in the active set at time $t$ as
\begin{align*}
U_{i,i^\ast}(\# \query{j}{i}) < U_{i,i^\ast}(t_j) < d_{i,j} - 2C_{\delta/n}(t_j) < d_{i,j} - 2C_{\delta/n}(\# \query{j}{i}) = L_{i,j}(\# \query{j}{i})
\end{align*}
and hence $\query{i}{j}$ is not made. If $\# \query{j}{i} < t_j$, then $x_j$ is eliminated when $t_j - \# \query{j}{i}$ more samples of $d_{i,j}$ have been queried. Thus the total number of samples of $d_{i,j}$ is at most $\max\{t_j, \# \query{j}{i}\}$.

The other two cases of 1) $i^\ast < i, j > i$, and 2) $i^\ast < i, j < i$ can be handled similarly.
\end{proof}
\section{Average case performance of \texttt{ANNEasy}}\label{subsec:avg_perf}
We can obtain a different expression for the number of oracle queries if all the random quantities during a run of the algorithm take their expected values. 
In particular, Lemma~\ref{lem:lower-bound-samples} can be relaxed to the following.
\begin{lemma}\label{lem:lower-bound-samples-avg}
If all bounds obtained by \texttt{SEEasy} are correct and all the random quantities take their expected values, then for some $i < j$ such that $x_j \neq x_{i^\ast} \neq x_k$ if we have that
\begin{equation}
d_{i,k} > 6d_{i,j} - 3d_{i,i^\ast} , \quad \text{ and } \quad 
\{j,k\} \cap (\cup_{m < i}\{\ell : 2d_{m,i} < d_{m,\ell}\}) = \emptyset,
\end{equation}
then $2U_{i,j} < L_{i,k}$ 
and hence $x_k \notin \A_j(0)$. 
\end{lemma}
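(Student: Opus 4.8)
The plan is to follow the proof of Lemma~\ref{lem:lower-bound-samples}, but to replace the crude estimate ``$T_{i,j}, T_{i,k}\ge 1$, hence $C_{\delta/n}(T_{i,j}(t_j)), C_{\delta/n}(T_{i,k}(t_k))\le C_{\delta/n}(1)$'' by the sharper fact that, in the average case, a non-optimal point leaves the active set $\A_i$ of the $i^{th}$ round only once its confidence half-width has shrunk to roughly half of its suboptimality gap with respect to $x_i$. Rather than routing through Lemma~\ref{lem:tri-elim} (which is calibrated to the worst case and too lossy here), I would then substitute these estimates directly into $U_{i,j}$ and $L_{i,k}$, verify $2U_{i,j}<L_{i,k}$, and invoke Lemma~\ref{lem:easy-elim} to conclude $x_k\notin\A_j(0)$.

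First I would dispatch the second hypothesis exactly as in Lemma~\ref{lem:lower-bound-samples}: since $\{\ell:2U_{m,i}<L_{m,\ell}\}\subseteq\{\ell:2d_{m,i}<d_{m,\ell}\}$ for each $m<i$ whenever the bounds of \texttt{SEEasy} are correct, we get $\A_i(0)^\complement\subseteq\cup_{m<i}\{\ell:2d_{m,i}<d_{m,\ell}\}$, and the hypothesis makes this set miss both $j$ and $k$; hence $\{x_j,x_k\}\subseteq\A_i(0)$, so $d_{i,j}$ and $d_{i,k}$ are genuinely sampled in round $i$. Next I would run the average-case analysis of that round. Under the idealization $\hd_{i,\ell}(t)=d_{i,\ell}$ for all $\ell,t$, the point $x_{i^\ast}$ is never eliminated (its lower bound lies below $d_{i,i^\ast}\le\min_\ell U_{i,\ell}$, and by the triangle inequality $L_{i^\ast,\ell}<d_{i^\ast,\ell}\le 2d_{i,\ell}<2U_{i,\ell}$), so $\min_\ell U_{i,\ell}=U_{i,i^\ast}$ once the active counts are comparable, and a non-optimal active point $x_a$ is dropped exactly when $L_{i,a}\ge U_{i,i^\ast}$, i.e.\ $d_{i,a}-C_{\delta/n}(T_{i,a})\ge d_{i,i^\ast}+C_{\delta/n}(T_{i,i^\ast})$. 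Because \texttt{SEEasy} always queries the active points with the fewest samples, at the drop time $t_a$ we have $T_{i,a}(t_a)=T_{i,i^\ast}(t_a)$ up to $O(1)$ (this is the standard successive-elimination stopping analysis behind \eqref{eq:SE-complexity}), so $2C_{\delta/n}(T_{i,a}(t_a))\le d_{i,a}-d_{i,i^\ast}$. Taking $a=j$ and $a=k$ yields $C_{\delta/n}(T_{i,j}(t_j))\le\tfrac12(d_{i,j}-d_{i,i^\ast})$ and $C_{\delta/n}(T_{i,k}(t_k))\le\tfrac12(d_{i,k}-d_{i,i^\ast})$.

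To finish, note that at the start of round $j$ the pair $\{i,j\}$ has not been queried since round $i$ and the pair $\{i,k\}$ has been queried at least as often as by the end of round $i$ (the pair $\{i,\ell\}$ is only touched in rounds $i$ and $\ell$), so $U_{i,j}\le d_{i,j}+C_{\delta/n}(T_{i,j}(t_j))\le\tfrac32 d_{i,j}-\tfrac12 d_{i,i^\ast}$ and $L_{i,k}\ge d_{i,k}-C_{\delta/n}(T_{i,k}(t_k))\ge\tfrac12 d_{i,k}+\tfrac12 d_{i,i^\ast}$. Hence
\[
2U_{i,j}\le 3d_{i,j}-d_{i,i^\ast}<\tfrac12 d_{i,k}+\tfrac12 d_{i,i^\ast}\le L_{i,k}
\]
precisely when $6d_{i,j}-3d_{i,i^\ast}<d_{i,k}$, which is the first hypothesis, and Lemma~\ref{lem:easy-elim} then gives $x_k\notin\A_j(0)$.

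The main obstacle is the middle step: pinning down $T_{i,a}(t_a)=T_{i,i^\ast}(t_a)$ and $\min_\ell U_{i,\ell}=U_{i,i^\ast}$ at the relevant times requires controlling the exact dynamics of the ``query the fewest-sampled active points'' rule and the samples of $d_{i,j},d_{i,k},d_{i,i^\ast}$ carried over by symmetry from rounds before $i$ — which forces a case split on whether $j$ and $k$ precede or follow $i$, as in the proof of Theorem~\ref{thm:notri-complex}. In every case the carried-over samples only shrink the widths, so the displayed inequalities survive; but this bookkeeping, together with the standing ``all random quantities equal their expectations'' idealization that deletes the Hoeffding slack, is where essentially all the work sits — the rest is the one-line algebra above.
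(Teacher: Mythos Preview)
Your proposal is correct and is essentially the paper's own argument: use the second hypothesis to place $x_j,x_k\in\A_i(0)$, observe that under the expected-value idealization the elimination criterion in round $i$ forces $C_{\delta/n}(T_{i,a}(t_a))\le\Delta_{i,a}/2$ for $a\in\{j,k\}$, plug into $U_{i,j}$ and $L_{i,k}$, and reduce $2U_{i,j}<L_{i,k}$ to the first hypothesis before invoking Lemma~\ref{lem:easy-elim}. The paper is in fact less fussy than you anticipate about the ``main obstacle'': under $\hd_{i,\ell}=d_{i,\ell}$ it simply reads off $d_{i,i^\ast}+C_{\delta/n}(t_a)\le d_{i,a}-C_{\delta/n}(t_a)$ from the elimination inequality $\min_\ell U_{i,\ell}\le L_{i,a}$ without a case split on sample carry-over, so your extra bookkeeping is sound but not needed here.
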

\begin{proof}
In the good event, the point $x_{i^\ast}$ is the last element in the active set $\A_i$ and points $x_j, x_k$ have been eliminated from $\A_i$ at some prior times $t_j, t_k$ respectively. Both $t_j > 0$ and $t_k > 0$ as $\{x_j, x_k\} \subset \A_i(0)$ is ensured by the second part of the condition, as shown in the proof of Lemma~\ref{lem:lower-bound-samples}. At time $t_j$, we have that 
\begin{equation}
\min_{\ell} \hd_{i,\ell} + C_{\delta/n}(t_j) \leq \min_{\ell} U_{i,\ell} \leq L_{i,j} \leq \hd_{i,j} - C_{\delta/n}(t_j).
\end{equation}
If all the random quantities take their expected values, then $\hd_{i,\ell} = d_{i,\ell} \forall \ell \neq i$ and we have that 
\begin{equation}
d_{i,i^\ast} + C_{\delta/n}(t_j) \leq d_{i,j} - C_{\delta/n}(t_j)
\implies C_{\delta/n}(t_j) \leq \Delta_{i,j}/2.
\end{equation}
Under the assumption, $\hd_{i,j} = d_{i,j}$ and using the definition of its upper and lower confidence bounds, we get that
$\bbE [L_{i,j}] \geq d_{i,j} - \Delta_{i,j}/2$ and $\bbE [U_{i,j}] \leq  d_{i,j} + \Delta_{i,j}/2$. Similar bounds are true for $x_k$. Then
\begin{align*}
d_{i,k} > 6d_{i,j} - 3d_{i,i^\ast} 
\implies d_{i,k} - \frac{d_{i,k}-d_{i,i^\ast}}{2}  d_{i,k}-\frac{\Delta_{i,k}}{2} \\
> 2\left(d_{i,j} + \frac{d_{i,j} - d_{i,i^\ast}}{2}\right) = 2\left(d_{i,j} + \frac{\Delta_{i,j}}{2}\right),
\end{align*}
which implies that $L_{i,k}=\bbE[L_{i,k}] > 2\bbE[U_{i,j}] = 2U_{i,j}$ and $x_k \notin \A_j(0)$. 
\end{proof}
If all the random quantities take their expected value, then using Lemma~\ref{lem:lower-bound-samples-avg} and the elimination criterion of \texttt{ANNEasy} (Lemma~\ref{lem:easy-elim}), the complement of the initial active set $\A_j(0)$ (also called the elimination set) can be characterized in the following manner.
\begin{align}
\A_j(0)^\complement &= \cup_{i< j: j\in \A_{i}(0)} \{ \A_{i}(0) \cap \{k: 2U_{i, j} < L_{i, k}\} \} \nonumber\\
&\supseteq \cup_{i< j: j\in \A_{i}(0)} \{ \A_{i}(0) \cap \{k: d_{i,k} > 6d_{i,j} - 3d_{i,i^\ast}\} \}. \label{eq:expect-elim-subset}
\end{align}
Replacing the indicator $\mathds{1}_{[A_{j,k}]}$ in Theorem~\ref{thm:ANNTri-complex} with an indicator for the non-membership of point $x_k$ in the set \eqref{eq:expect-elim-subset} gives us an upper bound to the sample complexity of \texttt{ANNEasy} that is valid when all random quantities take their expected values.

To gain an idea of the savings achieved by our algorithm in comparison to the random sampling, we evaluate the sample complexity expressions for an example dataset. The dataset we look at consists of $c$ clusters, each cluster containing $n/c > 1$ points. The points are indexed such that the $m$th cluster is $\C_m := \{x_{\underline{m}}, x_{1+\underline{m}}, \ldots, x_{\overline{m}}\}$, where
 \begin{align}\label{eq:m_def}
 \underline{m} & := 1+(m-1)n/c \\ 
& \text{and} \nonumber \\
 \overline{m} & := mn/c
 \end{align}
  for all $m \in [c]$. Suppose the distances between the points are such that for any pair $\{x_i, x_j\} \subseteq \C_m$, the set of points 
\begin{equation}\label{eq:clustered-dataset-avg}
\{x_k: d_{i,k} < 6d_{i,j} - 3d_{i,i^\ast}\} \subseteq \C_m.
\end{equation}
The above condition is ensured if the smallest distance between two points belonging to different clusters is at least six times the diameter of any cluster. 
\begin{lemma}
Consider a dataset which satisfies the condition in \eqref{eq:clustered-dataset-avg}. If all random quantities take their expected values, \texttt{ANNEasy} uses $O(\sqrt{n})$ fewer oracle queries than the random sampling baseline to learn the nearest neighbor graph. 
\end{lemma}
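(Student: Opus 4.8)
The plan is to instantiate the general complexity bound of Theorem~\ref{thm:ANNTri-complex} (with the indicator $\mathds{1}_{[A_{j,k}]}$ replaced by the expected-value version coming from \eqref{eq:expect-elim-subset}, as described above) on the specific clustered dataset defined by \eqref{eq:m_def} and \eqref{eq:clustered-dataset-avg}, and then compare the resulting count to the random-sampling baseline of $\O(n^2 \Delta_{\min}^{-2})$. First I would argue that, because of the separation condition \eqref{eq:clustered-dataset-avg}, whenever $x_i$ and $x_j$ lie in the same cluster $\C_m$, every point $x_k$ outside $\C_m$ satisfies $d_{i,k} > 6 d_{i,j} - 3 d_{i,i^\ast}$, so $x_k$ is eliminated from $\A_j(0)$ by Lemma~\ref{lem:lower-bound-samples-avg}, provided $x_i$ itself sampled $x_j$ and $x_k$ in its round. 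To handle that proviso I would show, by a short induction on the round index exactly as in the proof of Lemma~\ref{thm:sqrt_n} via Lemma~\ref{lem:two-cluster}, that once the first point of each cluster has been processed, every subsequent point $x_i$ has $\A_i(0) \subseteq \C_{m(i)}$, and in particular the first point $x_1$ of the dataset (and the first point of each cluster) samples at least one point in every cluster, so the needed $x_i$ with $\{x_j,x_k\}\subseteq \A_i(0)$ exists for every within-cluster pair.

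Second, I would split the sum in \eqref{eq:general-complex} into the ``leading'' points (the first point visited in each cluster, $c$ of them) and the rest. For a leading point the active set can be all of $\X$, contributing $\O(n)$ queries per such point, i.e. $\O(cn)$ total; but by \eqref{eq:clustered-dataset-avg} the out-of-cluster distances are each ruled out in $\O(1)$ samples, so this is really $\O(cn)$ and not $\O(cn\,\Delta^{-2})$. For every other point, $\A_i(0)\subseteq\C_{m(i)}$, so only the $\binom{n/c}{2}$ within-cluster distances of each cluster get sampled, each at most $H_{j,k}$ times, giving $\O\!\big(\sum_{m=1}^{c}\sum_{j,k\in\C_m} H_{j,k}\big)$. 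Writing $\overline{\Delta^{-2}}$ for the average of $H_{j,k}$ over within-cluster pairs (there are $c\cdot n$ such pairs up to constants), this is $\O(cn\,\overline{\Delta^{-2}})$. Adding the two pieces and choosing $c=\sqrt{n}$ yields a total of $\O\!\big(n^{3/2} + n^{3/2}\overline{\Delta^{-2}}\big) = \O(n^{3/2}\overline{\Delta^{-2}})$.

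Third, I would compare this to random sampling, which must query all $\binom{n}{2}$ pairs to confidence, costing $\Omega(n^2\Delta_{\min}^{-2})$ where $\Delta_{\min}^{-2} = \min_{j,k} H_{j,k}$. Since $\Delta_{\min}^{-2}\le \overline{\Delta^{-2}}$ (the minimum is at most the average), the ratio of the two bounds is at least $\Omega(n^2/n^{3/2}) = \Omega(\sqrt{n})$, which is the claimed savings; one should also note that $\overline{\Delta^{-2}}$ here is the average taken only over the within-cluster gaps, whereas in the random baseline the relevant $\Delta_{\min}$ ranges over all pairs, making the comparison only more favorable.

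The main obstacle I expect is the careful bookkeeping in the induction that confirms $\A_i(0)\subseteq \C_{m(i)}$ for all non-leading points $x_i$ \emph{regardless of the (random) order} in which points are visited: one must verify that the only genuinely ``expensive'' rounds are the $c$ leading ones, that reused samples across rounds change the count by at most a constant factor (as in the proof of Lemma~\ref{lem:two-cluster}), and that the $\O(1)$-per-pair elimination of out-of-cluster distances truly holds under the expected-value assumption using the slack $6d_{i,j}-3d_{i,i^\ast}$ rather than the high-probability slack $6C_{\delta/n}(1)+2d_{i,j}$. The rest is the routine algebra of summing $H_{j,k}$ over clusters and optimizing $c$.
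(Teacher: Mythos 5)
Your structural decomposition is the paper's: under the expected-value assumption, every non-leading point of a cluster has its initial active set confined to its own cluster (via Lemma~\ref{lem:lower-bound-samples-avg} and \eqref{eq:expect-elim-subset}, using the cluster's leading point as the eliminating index), while the leading point of each cluster may have to consider all $n$ points; choosing the indicators in Theorem~\ref{thm:ANNTri-complex} accordingly gives \eqref{eq:sample-complex-clustered} and the count $\O\bigl(cn + c(n/c)^2\bigr)$ with $c=\sqrt n$. However, your quantitative endgame has a genuine gap. The claim that the leading point rules out each out-of-cluster point in $\O(1)$ samples is not available under \eqref{eq:clustered-dataset-avg}: unlike \eqref{eq:clustered-dataset}, the average-case condition is a scale-invariant relation among distances only, so nothing ties the inter-cluster gaps to the noise scale $\sigma=1$; eliminating $x_k\notin\C_m$ in round $\underline{m}$ still costs on the order of $H_{\underline{m},k}$ queries, which can be arbitrarily large. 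You flag this as an obstacle, but it is not a bookkeeping issue --- it fails in general, and the paper never makes that claim. Instead, the paper keeps the $H_{j,k}$ factor on every surviving term in \eqref{eq:sample-complex-clustered} and proves only that the number of \emph{terms} drops from $\O(n^2)$ to $\O(n^{3/2})$; that term-count reduction is the sense in which the lemma's ``$O(\sqrt n)$ fewer oracle queries'' is meant (see the sentence immediately following the lemma).

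Separately, your final comparison runs backwards: you set $\Delta_{\min}^{-2}=\min_{j,k}H_{j,k}$, observe $\Delta_{\min}^{-2}\le\overline{\Delta^{-2}}$, and then conclude that $n^2\Delta_{\min}^{-2}\big/\bigl(n^{3/2}\,\overline{\Delta^{-2}}\bigr)=\Omega(\sqrt n)$; that conclusion needs the inequality in the opposite direction, and with your definition the ratio is only $\Omega\bigl(\sqrt n\cdot \Delta_{\min}^{-2}/\overline{\Delta^{-2}}\bigr)$, which can be $o(\sqrt n)$. The intended comparison (as in the remark after Theorem~\ref{thm:sqrt_n}) charges the uniform baseline the cost of its hardest pair, i.e.\ the maximum of $H_{j,k}$ (minimum gap), which dominates the within-cluster average. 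So either adopt the paper's term-counting interpretation of the lemma, or redo the baseline comparison with the min-gap (max-complexity) quantity rather than $\min_{j,k}H_{j,k}$.
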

\begin{proof}
In the following we assume that all random quantities take their expected values. 
We can find the points that are definitely eliminated using the triangle inequality when \texttt{ANNEasy} is called using \eqref{eq:expect-elim-subset}. The elimination set $\A_1(0)^\complement=\{x_1\}$. For a point $x_i \in \C_1 \notin \E_1$, from \eqref{eq:expect-elim-subset}, \eqref{eq:clustered-dataset-avg} we get that
\begin{equation*}
\A_i(0)^\complement \supseteq \{\A_1(0) \cap \{k: d_{1,k} > 6d_{1,i} - 3d_{1,1^\ast}\}\} 
\supseteq \{(\mathcal{X}\setminus \{x_1\}) \cap \C_1^\complement \} = \C_1^\complement . 
\end{equation*} 
Thus $\A_i(0) \subseteq \C_1$ for all $x_i \in \C_1$. 
Point $x_{\underline{m}}$ 
is the first point processed by \texttt{ANNEasy} in the $m$th cluster. 
Suppose there exists a point $x_{j} \in \C_m \cap \A_{\underline{m}}(0)^\complement$, we show next that leads to a contradiction. Since $x_{j} \notin \A_{\underline{m}}(0)$, there is a point $x_i \in \C_{m'}$ with $i < j, m' < m$ such that $2U{i,\underline{m}} < L_{i,j}$. 
Let $\text{Diam}(\C_m) := \max_{x_\ell,x_{k} \in \C_m}d_{\ell,k}$ be the diameter of cluster $\C_m$ (similarly for $\C_{m'}$) and let $D(\C_{m'}, \C_m) := \min_{x_\ell \in \C_{m'}, x_k \in \C_m} d_{\ell,k}$ be the minimum inter-cluster distance.
Since the random quantities take their expected values, we have that
\begin{align*}
U_{i,\underline{m}} &\geq d_{i,\underline{m}} + \frac{d_{i,\underline{m}} - d_{i,i^\ast}}{2} 
\implies 2U_{i,\underline{m}} \geq 3D(\C_{m'}, \C_m) - \text{Diam}(\C_{m'}),\\
L_{i,j} &\leq d_{i,j} - \frac{d_{i,j} - d_{i,i^\ast}}{2}
\implies L_{i,j} \leq  \frac{\text{Diam}(\C_{m'}) + D(\C_{m'}, \C_m) + \text{Diam}(\C_m)}{2} + \frac{\text{Diam}(\C_{m'})}{2}.
\end{align*} 
Using $2U_{i,\underline{m}} < L_{i,j}$ with the above inequalities 
implies that $2.5D(\C_{m'}, \C_m) < 2D(\C_{m'}) + 0.5D(\C_m)$, which is a contradiction as from \eqref{eq:clustered-dataset-avg} we have that $D(\C_{m'}, \C_m) \geq \max\{3D(\C_{m'}), 3D(\C_m)\}$. Thus we have that 
$\C_m \cap \A_{\underline{m}}(0)^\complement = \emptyset$. 
For any $x_{j} \in \C_m, j \neq \underline{m}$ we have that $x_{j} \in \A_{\underline{m}}(0)$ and hence from \eqref{eq:expect-elim-subset},
\begin{equation*}
\A_{j}(0)^\complement \supseteq \{\A_{\underline{m}}(0) \cap \{k: d_{\underline{m},k} > 6d_{\underline{m},j} - 3 d_{\underline{m},\underline{m}^\ast}\}\}
\supseteq \C_m^\complement.
\end{equation*}
Based on the above discussion, we have a lower bound on the number of points present in the elimination set $\A_j(0)^\complement$ for any $x_j \in \C_m$. By choosing the following values for the indicator in \eqref{eq:AllNN-complexity}
\begin{equation*}
\mathds{1}_{[A_{j,k}]} = \begin{cases}
0 & \text{if } x_j \in \C_m\setminus \{x_{\underline{m}}\} \text{ and } x_k \notin \C_m,\\
1 & \text{otherwise},
\end{cases}
\end{equation*}
we get the following upper bound to the number of oracle queries, where $x_{\overline{m}}$ is the last point in $\C_m$. 
\begin{align}\label{eq:sample-complex-clustered}
\mathcal{O}\Bigg(
\sum_{m=1}^{c} \Bigg(\sum_{k > \underline{m}} H_{\underline{m}, k} + \sum_{k < \underline{m}} H_{\underline{m}, k} - \sum_{\ell = 1}^{m-1} H_{\underline{m}, \underline{\ell}} + \sum_{\ell = 1}^{m-1} (H_{\underline{m}, \underline{\ell}} - H_{\underline{\ell}, \underline{m}})_+ \nonumber \\
+ \sum_{p>\underline{m}}^{\overline{m}} \sum_{q >p}^{\overline{m}} \max\{H_{p,q}, H_{q,p}\}
\Bigg)\Bigg)
\end{align}
where $\underline{m}$ and $\overline{m}$ are defined in \eqref{eq:m_def} and are functions of $m$. 
The number of terms in the sum above is $\mathcal{O}(cn + (n/c)^2)$. A uniform sampling baseline approach would have $\mathcal{O}(n^2)$ terms in its sample complexity. Letting $c=\sqrt{n}$ gives our result. 
\end{proof}
The above lemma ensures that we have $\mathcal{O}(\sqrt{n})$ fewer terms in the sample complexity expression for \texttt{ANNEasy} compared to random sampling if the dataset satisfies \eqref{eq:clustered-dataset-avg}. 
We can get a more precise characterization of the savings in query complexity in terms of the $\Delta_{p,q}$ values. For instance, using a single-parameter model for the distribution of $\Delta_{p,q}$ as done in \cite{jamieson2013finding}, we can directly use their Corollary~1 in our context. 


\begin{lemma}\label{lem:O(n)-fewer}
Consider a clustered dataset $\X = \cup_{m=1}^{c} \C_m$ whose points satisfy \eqref{eq:clustered-dataset-avg}. Each cluster contains an even number $2\nu := n/c$ of points. For any $m \in [c]$ and $x_j \in \C_m$, suppose the suboptimality gaps $\Delta_{j,k}$ for all $x_k \in \C_m$ take one of the following values, parametrized by an $\alpha > 0$:
\begin{equation}\label{eq:delta-profile-lemma}
1 - \left( \frac{s-1}{\nu - 1} \right)^\alpha, \qquad \text{where} \qquad
s \in \{1,2,\ldots, \nu-1\}.
\end{equation}
Note that there are $\nu-1$ values given in \eqref{eq:delta-profile-lemma} while there are $2\nu-2$ points in the cluster, excluding $x_j$ and $x_{j^\ast}$. Each value in \eqref{eq:delta-profile-lemma} is the suboptimality gap for two distinct points in $\C_m$. Ignoring $\log$-factors, if $\alpha=1$ \texttt{ANNTri} finds all nearest neighbors with probability $1-\delta$ in $O(n(\nu^2 + n))$ calls to the oracle, while uniform sampling requires $O(n^2\nu^2)$ calls for the same guarantee.
\end{lemma}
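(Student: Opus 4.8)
The plan is to reduce the claim to the clustered sample-complexity bound already in hand and then evaluate that bound for the prescribed gap profile. First I would observe that the argument in the proof of Theorem~\ref{thm:sqrt_n}---iteratively peeling off one cluster at a time via Lemma~\ref{lem:two-cluster}---is written for $c=\sqrt n$ clusters but goes through unchanged for any number $c$ of clusters each satisfying the cluster-separation condition, yielding that \texttt{ANNEasy}, and hence \texttt{ANNTri} (which performs at least the eliminations available to \texttt{ANNEasy}), learns the whole graph with probability $1-\delta$ in
\begin{equation*}
\O\Big(c\,n + \sum_{m=1}^{c}\sum_{j,k\in\C_m} H_{j,k}\Big)
\end{equation*}
queries. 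The $c\,n$ term is the cost of the between-cluster comparisons: the first point processed in each cluster must scan the whole dataset, but every out-of-cluster competitor is discarded in $\O(1)$ samples. Since each cluster has $2\nu$ points, $c = n/(2\nu)$, so $c\,n = n^2/(2\nu) = \O(n^2)$.

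Next I would evaluate the intra-cluster double sum using the profile \eqref{eq:delta-profile-lemma} with $\alpha=1$. Fix a cluster $\C_m$ and a point $x_j\in\C_m$; then the gaps $\Delta_{j,k}$ over $x_k\in\C_m\setminus\{x_j,x_{j^\ast}\}$ are exactly $\{(\nu-s)/(\nu-1):s=1,\dots,\nu-1\}$, each value realized by two distinct points, and $\Delta_{j,j^\ast}=1/(\nu-1)$. Every intra-cluster gap is at least $1/(\nu-1)\ge 1/n$, so the numerators $\log(n^2/(\delta\Delta_{j,k}))$ are all $\O(\log(n/\delta))$ and, ignoring log factors,
\begin{equation*}
\sum_{k\in\C_m:\,k\ne j} H_{j,k} \;\lesssim\; (\nu-1)^2\sum_{t=1}^{\nu-1}\frac{1}{t^2} \;=\; \O(\nu^2),
\end{equation*}
using $\sum_{t\ge1}t^{-2}<\infty$ (and that each gap value is hit at most twice). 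Summing over the $2\nu$ points of $\C_m$ gives $\O(\nu^3)$ per cluster, and over the $c=n/(2\nu)$ clusters gives $\O(c\,\nu^3)=\O(n\nu^2)$. Combining with the $\O(n^2)$ between-cluster term, \texttt{ANNTri}'s query complexity is $\O\big(n^2+n\nu^2\big)=\O\big(n(\nu^2+n)\big)$ up to log factors, as claimed.

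For the comparison with uniform (round-robin) sampling I would argue that, to output the correct graph with probability $1-\delta$, each point's nearest neighbor must be separated from its runner-up; the smallest such gap anywhere in $\X$ is $\min_{j,k}\Delta_{j,k}=1/(\nu-1)$, so resolving that comparison requires $\Omega\big((\nu-1)^2\log(1/\delta)\big)$ samples of the relevant pair by a two-point (Le~Cam-type) lower bound, and since round-robin allocates the same budget to each of the $\binom{n}{2}$ pairs, the total is $\Omega(n^2\nu^2)$; conversely $\O(n^2\nu^2)$ samples suffice by the confidence bound \eqref{eq:C-def}. This is the $\O(n^2\Delta_{\min}^{-2})$ figure quoted in the main text with $\Delta_{\min}=1/(\nu-1)$, and comparing the two expressions gives the stated gap.

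I expect the arithmetic above to be routine bookkeeping; the one place that needs care is consistency of hypotheses. Lemma~\ref{lem:two-cluster} and Theorem~\ref{thm:sqrt_n} are phrased with the high-probability cluster condition \eqref{eq:clustered-dataset} (with its $C_{\delta/n}(1)$ slack), whereas Lemma~\ref{lem:O(n)-fewer} is stated under the average-case condition \eqref{eq:clustered-dataset-avg}. One should either take \eqref{eq:clustered-dataset} as the hypothesis, or note that the gap profile already pins a normalization of intra-cluster distances (the circulant construction of Lemma~\ref{lem:circulant} uses $d_{p,p^\ast}=1$ and intra-cluster distances in $[1,2]$), under which sufficiently separated clusters also satisfy \eqref{eq:clustered-dataset}; this is exactly what makes the single-sample elimination of Lemma~\ref{lem:lower-bound-samples}/Lemma~\ref{lem:tri-elim}---and hence the $\O(1)$-per-out-of-cluster-pair accounting behind the $c\,n$ term---valid.
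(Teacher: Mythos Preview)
Your proposal is correct and arrives at the stated bounds, but it goes through a different toolkit than the paper. The paper's proof stays entirely inside the \emph{average-case} machinery of Appendix~\ref{subsec:avg_perf}: it invokes Lemma~\ref{lem:lower-bound-samples-avg} and the clustered complexity expression \eqref{eq:sample-complex-clustered} (both phrased under \eqref{eq:clustered-dataset-avg}), then bounds the per-point intra-cluster cost $\sum_{q\neq p}\Delta_{p,q}^{-2}$ by citing Corollary~1 of \cite{jamieson2013finding} for the parametric profile, obtaining $\tilde{\mathcal O}(\nu^2+n-2\nu)$ per point and multiplying by $n$. Your route instead lifts the high-probability decomposition of Theorem~\ref{thm:sqrt_n}/Lemma~\ref{lem:two-cluster} to general $c$, getting $\mathcal O(cn+\sum_m\sum_{j,k\in\C_m}H_{j,k})$, and evaluates the $\alpha=1$ sum directly via $\sum_{t\ge1}t^{-2}<\infty$ rather than appealing to the external reference. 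Both computations yield $\mathcal O(n\nu^2)$ for the intra-cluster part; your between-cluster term $cn=n^2/(2\nu)$ is actually tighter than the paper's $n(n-2\nu)$, though you then relax it to $\mathcal O(n^2)$ to match the statement.

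The hypothesis mismatch you flag is real and is the only place your argument needs extra justification: Lemma~\ref{lem:two-cluster} is written under \eqref{eq:clustered-dataset}, not \eqref{eq:clustered-dataset-avg}, and the lemma you are proving sits in the average-case section. The paper sidesteps this by never leaving the average-case framework. Your proposed fix---observing that the circulant construction of Lemma~\ref{lem:circulant} normalizes intra-cluster distances, so sufficiently separated clusters satisfy \eqref{eq:clustered-dataset} as well---is adequate, but it does slightly change the logical status of the result (you are effectively proving it for a subclass of configurations that meet both conditions, whereas the paper's argument applies under \eqref{eq:clustered-dataset-avg} alone in expectation). What your approach buys is self-containment: you avoid the citation to \cite{jamieson2013finding} and the somewhat intricate bookkeeping of \eqref{eq:sample-complex-clustered}.
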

\begin{proof}
By putting the clusters far from each other, one can see that there exist $\X = \cup_{m=1}^{c} \C_m$ whose points satisfy \eqref{eq:clustered-dataset-avg}. 
Lemma~\ref{lem:circulant} shows by explicit construction that the condition on the suboptimality gaps within each cluster as stated in \eqref{eq:delta-profile-lemma} can also be satisfied. 
Note that \eqref{eq:delta-profile-lemma} is the same parametrization as equation~3 in \cite{jamieson2013finding}. 

Consider the points in the $m$th cluster, i.e., points $x_{\underline{m}}$ through $x_{\overline{m}}$. The elimination set $\A_{\underline{m}}(0)^\complement$ can be the singleton $\{x_{\underline{m}}\}$, but by Lemma~\ref{lem:lower-bound-samples-avg} for all $x_p \in \C_m \setminus \{x_{\underline{m}}\}, \A_p(0)^\complement \supseteq \C_m^\complement$. Finding $x_{p^\ast}$ is a best arm identification problem among points within the cluster $\C_m$. The last term in \eqref{eq:sample-complex-clustered} counts the total number of oracle queries made by \texttt{ANNEasy} to identify the nearest neighbors of all $x_p \in \C_m \setminus \{x_{\underline{m}}\}$. Thus the number of oracle queries made by \texttt{ANNEasy} for identifying $x_{p^\ast}$ is at most $\sum_{q \neq p} H_{p,q}$, while uniform sampling will make $nH_{p,p'}$ queries, where $p' := \arg\min_{q \neq p^\ast} \Delta_{p,q}$. 

Ignoring $\log$-factors, the sample complexity for finding $x_{p^\ast}$ for an $x_p \in \C_m$ by \texttt{ANNEasy} is 
\begin{equation*}
\tilde{\mathcal{O}}\left(\sum_{i=1}^{2\nu-1}\Delta_{p,p_i}^{-2} + \sum_{x_\ell \notin \C_m} \Delta_{p,\ell}^{-2}\right) 
= 
\tilde{\mathcal{O}}\left(\sum_{i=1}^{2\nu-1}\Delta_{p,p_i}^{-2} + n- 2\nu\right).
\end{equation*}
Corollary~1 of \cite{jamieson2013finding} lists the value of that sum for different choices of $\alpha$, for e.g., if $\alpha = 1$ then the sample complexity is $\tilde{\mathcal{O}}(\nu^2 + n - 2\nu)$. On the other hand, for finding $x_{p^\ast}$ uniform sampling would make $\tilde{\mathcal{O}}(n \Delta_{p,p'}^{-2})$, i.e., $\tilde{\mathcal{O}}(n (\nu - 1)^2)$ queries. 
By construction of the dataset, finding the nearest neighbor of each point in $\X$ is equally hard. Thus \texttt{ANNTri} would make $\tilde{\mathcal{O}}(n(\nu^2 + n - 2\nu))$ queries while uniform sampling would take $\tilde{\mathcal{O}}(n^2 \nu^2)$ queries.
\end{proof}
Note that our problem setting is inherently different from the noiseless setting where all $x_{i^\ast}$'s can trivially be learned in ${n \choose 2}$ samples. Due to the presence of noise in our queries, many distances must be repeatedly queried so ${n \choose 2}$ samples is insufficient.

\end{document}